\icmltitlerunning{Taylor Expansion Policy Optimization}
\let\originalleft\left
\let\originalright\right
\renewcommand{\left}{\mathopen{}\mathclose\bgroup\originalleft}
\renewcommand{\right}{\aftergroup\egroup\originalright}
\newcommand{\CommaBin}{\mathbin{\raisebox{0.5ex}{,}}}
\renewcommand{\epsilon}{\varepsilon}
\renewcommand{\bar}{\overline}
\newlength{\minipagewidth}
\newtheorem{predefinition}{Definition}
\newtheorem{theorem}{Theorem}
\newtheorem{preproposition}{Proposition}
\newenvironment{proposition}[1]
{
\begin{preproposition}
}{
\end{preproposition}
}
\renewcommand{\phi}{\varphi}
\renewcommand{\epsilon}{\varepsilon}
\newcommand{\transpose}{^\mathsf{\scriptscriptstyle T}}
\begin{document}

\twocolumn[
\icmltitle{Taylor Expansion Policy Optimization}



\icmlsetsymbol{equal}{*}

\begin{icmlauthorlist}
\icmlauthor{Yunhao Tang}{cu}
\icmlauthor{Michal Valko}{dm}
\icmlauthor{R\'emi Munos}{dm}
\end{icmlauthorlist}

\icmlaffiliation{cu}{Columbia University, New York, USA}
\icmlaffiliation{dm}{Google DeepMind, Paris, France}


\icmlkeywords{Machine Learning, ICML}

\vskip 0.3in
]



\printAffiliationsAndNotice{}  

\begin{abstract}
 In this work, we investigate the application of Taylor expansions in reinforcement learning. In particular, we propose \emph{Taylor expansion policy optimization}, a policy optimization formalism that generalizes prior work (e.g., TRPO) as a first-order special case. We also show that Taylor expansions intimately relate to off-policy evaluation. Finally, we show that this new formulation entails modifications which  improve the performance of several state-of-the-art distributed algorithms.
 
\end{abstract}


\section{Introduction}
\label{sec:intro}


Policy optimization is a major framework in model-free reinforcement learning (RL), with successful applications in challenging domains \citep{silver2016alphago,berner2019dota,vinyals2019alphastar}. Along with scaling up to powerful computational architectures \citep{mnih2016asynchronous,espeholt2018impala}, significant algorithmic performance gains are driven by insights into the drawbacks of na\" ive policy gradient algorithms \citep{sutton2000policy}. Among all algorithmic improvements, two of the most prominent are: \textit{trust-region policy search} \citep{schulman2015trust,schulman2017proximal,abdolmaleki2018maximum,song2019v} and \textit{off-policy corrections} \citep{munos2016safe,wang2016sample,gruslys2017reactor,espeholt2018impala}.

At the first glance, these two streams of ideas focus on orthogonal aspects of policy optimization. For trust-region policy search, the idea is to constrain the size of policy updates. This limits the deviations between consecutive policies and lower-bounds the performance of the new policy \citep{kakade2002approximately,schulman2015trust}. On the other hand, off-policy corrections require that we account for the discrepancy between target policy and behavior policy.
 \citet{espeholt2018impala} has observed that the corrections are especially useful for distributed algorithms, where behavior policy and target policy typically differ. Both algorithmic ideas have contributed significantly to stabilizing policy optimization.


 In this work, we partially unify both algorithmic ideas into a single framework. In particular, we noticed that as a ubiquitous approximation method, \textit{Taylor expansions}  share high-level similarities with both trust region policy search and off-policy corrections. To get high-level intuitions of such similarities, consider a simple 1D example of Taylor expansions. Given a sufficiently smooth real-valued function on the real line $f: \mathbb{R} \rightarrow \mathbb{R}$, the $k$-th order Taylor expansion of $f(x)$ at $x_0$ is  
    $f_k(x) \triangleq f(x_0) + \sum_{i=1}^k [f^{(i)}(x_0) / i!] (x-x_0)^i\!,$ 
where $f^{(i)}(x_0)$ are the $i$-th order derivatives at $x_0$. First, a common feature shared by Taylor expansions and trust-region policy search is the inherent notion of a trust region constraint. Indeed, in order for convergence to take place, a  \emph{trust-region constraint} is required $|x-x_0| < R(f,x_0)$\footnote{Here, $R(f,x_0)$ is the convergence radius of the expansions, which in general depends on the function $f$ and origin $x_0$.}.
Second, when using the truncation as an approximation to the original function $f_K(x)\approx f(x)$, Taylor expansions satisfy the requirement of off-policy evaluations: evaluate target policy with behavior data. Indeed, to evaluate the truncation $f_K(x)$ at any $x$ (\emph{target policy}), we only require the \emph{behavior policy} ``data'' at $x_0$ (i.e., derivatives $f^{(i)}(x_0)$).

Our paper proceeds as follows. In Section~\ref{sec:terl}, we start with a general result of applying Taylor expansions to Q-functions. When we apply the same technique to the RL objective, we reuse the general result and derive a higher-order policy optimization objective.   This leads to Section~\ref{sec:TayPO}, where we formally present the \emph{Taylor Expansion Policy Optimization} (TayPO)  and generalize prior work \citep{schulman2015trust,schulman2017proximal} as a first-order special case. In Section~\label{sec:uni}, we make clear connection between Taylor expansions and $Q(\lambda)$ \citep{harutyunyan_QLambda_2016}, a common return-based off-policy evaluation operator. Finally, in Section~\ref{sec:exp}, we show the performance gains due to the higher-order objectives across a range of state-of-the-art distributed deep RL agents.

\section{Taylor expansion for reinforcement learning}
\label{sec:terl}

Consider a Markov Decision Process (MDP) with state space $\mathcal{X}$ and action space $\mathcal{A}$. Let policy $\pi(\cdot|x)$ be a distribution over actions give state $x$. At a discrete time $t \geq 0$, the agent in state $x_t$ takes action $a_t \sim \pi(\cdot|x_t)$, receives reward $r_t \triangleq  r(x_t,a_t)$, and transitions to a next state $x_{t+1} \sim p(\cdot|x_t,a_t)$. We assume a discount factor $\gamma \in [0,1)$. Let $Q^\pi(x,a)$ be the action value function (Q-function) from state $x,$ taking action $a,$ and following policy $\pi$. For convenience, we use $ d_\gamma^\pi(\cdot,\cdot|x_0,a_0,\tau)$ to denote the discounted visitation distribution starting from state-action pair $(x_0,a_0)$ and following $\pi$, such that $d_\gamma^\pi(x,a|x_0,a_0,\tau) = (1-\gamma)\gamma^{-\tau} \sum_{t\geq \tau} \gamma^t P(x_t=x|x_0,a_0,\pi) \pi(a|x)$.
We thus have $Q^\pi(x,a) = (1-\gamma)^{-1}\mathbb{E}_{(x',a')\sim d_\gamma^\pi(\cdot,\cdot|x,a,0)}[r(x',a')]$. We focus on the RL objective of optimizing $\max_\pi J(\pi) \triangleq \mathbb{E}_{\pi,x_0}[\sum_{t\geq 0}\gamma^t r_t]$ starting from a fixed initial state $x_0$.

We define some useful matrix notation. For ease of analysis, we assume that $\mathcal{X}$ and $\mathcal{A}$ are both finite. Let $R \in \mathbb{R}^{|\mathcal{X}| \times |\mathcal{A}|}$ denote the reward function and $P^\pi\in \mathbb{R}^{|\mathcal{X}||\mathcal{A}| \times |\mathcal{X}||\mathcal{A}|} $ denote the transition matrix such that $P^\pi(x,a,y,b) \triangleq  p(y|x,a) \pi(b|y)$. We also define $Q^\pi\in \mathbb{R}^{|\mathcal{X}| \times |\mathcal{A}|}$ as the vector Q-function.
This matrix notation facilitates compact derivations, for example, the Bellman equation writes as $Q^\pi = R + \gamma P^\pi Q^\pi$.

\subsection{Taylor Expansion of Q-functions.} In this part, we state the Taylor expansion of Q-functions. Our motivation for the expansion is the following: Assume we aim to estimate $Q^\pi(x,a)$ for target policy $\pi$, and we only have access to data collected under a behavior policy $\mu$. Since $Q^\mu(x,a)$ can be readily estimated with the collected data, how do we approximate $Q^\pi(x,a)$ with $Q^\mu(x,a)$? 

Clearly, when $\pi = \mu$, then $Q^\pi = Q^\mu$. Whenever $\pi \neq \mu$, $Q^\pi$ starts to deviate from $Q^\mu$. Therefore, we apply Taylor expansion to describe the deviation $Q^\pi - Q^\mu$ in the orders of $P^\pi - P^\mu$. We provide the following result.
\begin{theorem}\label{thm:Taylor}

(proved in Appendix \ref{appendix:taylorexpansionq}) For any policies $\pi$ and~$\mu,$ and any $K \geq 1$,
we have
\begin{align}
    Q^\pi -Q^\mu=& \sum_{k=1}^K \left(\gamma (I-\gamma P^\mu)^{-1} (P^\pi - P^\mu)\right)^k Q^\mu \notag\\
    & +  \left(\gamma (I-\gamma P^\mu)^{-1} (P^\pi - P^\mu)\right)^{K+1} Q^\pi.\notag
\end{align}
In addition, if $||\pi-\mu||_1 \triangleq \max_{x} \sum_a |\pi(a|x) - \mu(a|x)| < (1-\gamma)/\gamma$, then the limit for $K\rightarrow\infty$ exists and we have
\begin{align} \label{eq:taylorexpansionoriginal}
Q^\pi - Q^\mu = \sum_{k=1}^\infty \underbrace{\left(\gamma (I - \gamma P^\mu)^{-1}(P^\pi - P^\mu)\right)^k Q^\mu}_{\triangleq U_k}.
\end{align}
\end{theorem}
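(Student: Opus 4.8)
The plan is to eliminate the reward $R$ using the two Bellman equations $Q^\pi = R + \gamma P^\pi Q^\pi$ and $Q^\mu = R + \gamma P^\mu Q^\mu$, and then never reintroduce it. First I would subtract them and add and subtract $\gamma P^\mu Q^\pi$: $Q^\pi - Q^\mu = \gamma P^\pi Q^\pi - \gamma P^\mu Q^\mu = \gamma(P^\pi - P^\mu)Q^\pi + \gamma P^\mu(Q^\pi - Q^\mu)$. Rearranging gives $(I-\gamma P^\mu)(Q^\pi - Q^\mu) = \gamma(P^\pi - P^\mu)Q^\pi$. Since $P^\mu$ is row-stochastic, $\gamma P^\mu$ has spectral radius $\gamma < 1$, so $I - \gamma P^\mu$ is invertible (Neumann series $\sum_{t\ge 0}(\gamma P^\mu)^t$), yielding the key one-step identity $Q^\pi - Q^\mu = A\,Q^\pi$, where $A \triangleq \gamma(I-\gamma P^\mu)^{-1}(P^\pi - P^\mu)$; equivalently $Q^\pi = Q^\mu + A\,Q^\pi$.

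Next I would establish the finite-$K$ identity by induction on $K$. The base case $K=1$ follows by substituting $Q^\pi = Q^\mu + A\,Q^\pi$ into $Q^\pi - Q^\mu = A\,Q^\pi$, giving $Q^\pi - Q^\mu = A\,Q^\mu + A^2 Q^\pi$. For the inductive step, assuming $Q^\pi - Q^\mu = \sum_{k=1}^K A^k Q^\mu + A^{K+1}Q^\pi$, I replace the trailing $Q^\pi$ by $Q^\mu + A\,Q^\pi$, turning $A^{K+1}Q^\pi$ into $A^{K+1}Q^\mu + A^{K+2}Q^\pi$, which advances the index to $K+1$. This is exactly the first displayed equation in the statement.

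For the convergence claim, I would equip $\R^{|\X||\A|}$ with the $\ell_\infty$ norm and the induced matrix norm. Since $P^\mu$ is nonnegative with unit row sums, $\|P^\mu\|_\infty = 1$, so $\|(I-\gamma P^\mu)^{-1}\|_\infty \le \sum_{t\ge 0}\gamma^t = (1-\gamma)^{-1}$. A direct computation gives $\|P^\pi - P^\mu\|_\infty = \max_{x,a}\sum_{y}p(y|x,a)\sum_b|\pi(b|y)-\mu(b|y)| \le \max_y \sum_b |\pi(b|y)-\mu(b|y)| = \|\pi-\mu\|_1$. Hence $\|A\|_\infty \le \gamma(1-\gamma)^{-1}\|\pi-\mu\|_1 =: c$, and the hypothesis $\|\pi-\mu\|_1 < (1-\gamma)/\gamma$ forces $c < 1$. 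Then $\|A^{K+1}Q^\pi\|_\infty \le c^{K+1}\|Q^\pi\|_\infty \to 0$ and $\sum_{k\ge 1}\|A^k Q^\mu\|_\infty \le \|Q^\mu\|_\infty \sum_{k\ge 1} c^k < \infty$, so the series $\sum_{k\ge 1} U_k$ converges absolutely and letting $K\to\infty$ in the finite-$K$ identity yields \eqref{eq:taylorexpansionoriginal}.

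The main obstacle — a mild one — is making the operator-norm bound on $A$ sharp enough to match the stated threshold $(1-\gamma)/\gamma$. The crucial points are that $\ell_\infty$ is the right norm (so that stochasticity of $P^\mu$ and $P^\pi$ gives clean bounds through the Neumann series) and that the average over next states $y$ inside $P^\pi - P^\mu$ does not inflate the total-variation-type quantity $\sum_b|\pi(b|y)-\mu(b|y)|$; with those in place the rest is bookkeeping.
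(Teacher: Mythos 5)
Your proof is correct and follows essentially the same route as the paper's: you derive the one-step identity $Q^\pi = Q^\mu + \gamma(I-\gamma P^\mu)^{-1}(P^\pi-P^\mu)Q^\pi$ (the paper gets it via a resolvent identity on $(I-\gamma P^\pi)^{-1}R$, you by subtracting the two Bellman equations, which is equivalent), iterate it $K$ times, and kill the residual using the sup-norm bound $\|\gamma(I-\gamma P^\mu)^{-1}(P^\pi-P^\mu)\|_\infty \le \frac{\gamma}{1-\gamma}\|\pi-\mu\|_1 < 1$. No gaps; your explicit justification of the $\|P^\pi-P^\mu\|_\infty \le \|\pi-\mu\|_1$ step is if anything more careful than the paper's one-line claim.
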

The constraint between $\pi$ and $\mu$ is a result of the convergence radius of the Taylor expansion. The derivation follows by recursively applying the following equality:
$Q^\pi = Q^\mu + \gamma (I-\gamma P^\mu)^{-1}(P^\pi - P^\mu)Q^\pi.$ Please refer to the Appendix~\ref{appendix:taylorexpansionq} for a proof. For ease of notation, denote the $k$-th term on the RHS of Eq.\,\ref{eq:taylorexpansionoriginal} as $U_k$. This gives rise to $Q^\pi - Q^\mu = \sum_{k=1}^\infty U_k$. 

To represent $Q^\pi - Q^\mu$  explicitly with the deviation between $\pi$ and $\mu$, consider a diagonal matrix $D_{\pi/\mu}(x,a,y,b) \triangleq \pi(a|x)/\mu(b|y) \cdot \delta_{x=y,a=b}$ where $x,y \in \mathcal{X}, a,b\in \mathcal{A}$ and 
 where $\delta$ is the Dirac delta function; we restrict to the case where $\mu(a|x) > 0,\forall x,a$. This diagonal matrix $D_{\pi/\mu} - I$ is a measure of the deviation between $\pi$ and $\mu$. The above expression can be rewritten as 
\begin{equation}
Q^\pi - Q^\mu = \sum_{k=1}^\infty (\gamma (I - \gamma P^\mu)^{-1}P^\mu(D_{\pi/\mu} - I))^k Q^\mu.
\label{eq:taylorexpansion}
\end{equation}
We will see that the expansion in Eq.\,\ref{eq:taylorexpansion} is useful in Section\ref{sec:TayPO} when we derive  the Taylor expansion of the difference between the performances of two policies, $J(\pi) - J(\mu)$. In Section~\ref{sec:uni}, we also provide the connection between Taylor expansion and off-policy evaluation.

\subsection{Taylor expansion of reinforcement learning objective}
\label{ss:tayex}
When searching for a better policy, we are often interested in the difference $J(\pi) - J(\mu)$. With Eq.\,\ref{eq:taylorexpansion}, we can derive a similar Taylor expansion result for $J(\pi) - J(\mu)$. Let $\pi_t$ (resp., $\mu_t$) be the shorthand notation for $\pi(a_t|x_t)$ (resp., $\mu(a_t|x_t)$). Here, we formalize the orders of the expansion as the number of times that ratios $\pi_t / \mu_t - 1$ appear in the expression, e.g., the first-order expansion should only involve $\pi_t / \mu_t - 1$ up to the first order, without higher order terms, e.g., cross product $(\pi_t / \mu_t - 1)(\pi_{t^\prime} / \mu_{t^\prime} - 1)$. We denote the $k$-th order as $L_k(\pi,\mu)$ and by construction $J(\pi) - J(\mu) = \sum_{k=1}^\infty L_k(\pi,\mu)$. Next, we derive practically useful expressions for $L_k(\pi,\mu)$.



We provide a derivation sketch below and give the details in Appendix~\ref{appendix:taylorexpansionrl}. Let $\pi_0, \mu_0 \in \mathbb{R}^{|\mathcal{X}|\times|A|}$ be the joint distribution of policies and state at time $t=0$ such that $\pi_0(x,a) = \pi(a|x) \delta_{x=x_0}$. Note that the RL objective equivalently writes as $J(\pi) = V^\pi(x_0) = \sum_{a} \pi(a|x_0)Q^\pi(x_0,a)$ and can be expressed as an inner product $J(\pi) = \pi_0\transpose Q^\pi$. This allows us to import results from Eq.\,\ref{eq:taylorexpansion},
\begin{align}
    &J(\pi) - J(\mu) = \pi_0\transpose Q^\pi - \mu_0\transpose Q^\mu \label{eq:pgobjective}  \\ &= \left(\pi_0 - \mu_0\right)\transpose \left(Q^\mu + \sum_{k\geq1} U_k\right) + \mu_0\transpose \left(\sum_{k\geq1} U_k\right).  \nonumber
\end{align}

By reading off different orders of the expansion from the RHS of Eq.\,\ref{eq:pgobjective}, we derive
\begin{align}
    L_1(\pi,\mu) &= (\pi_0 - \mu_0)\transpose Q^\mu + \mu_0\transpose U_1,\label{eq:rlorders}\\
    L_k(\pi,\mu) &= (\pi_0 - \mu_0)\transpose U_{k-1} + \mu_0\transpose U_k, \ \forall k \geq 2. \nonumber
\end{align}
It is worth noting that the $k$-th order expansion of the RL objective $L_k(\pi,\mu)$ is a mixture of the $(k-1)$-th and $k$-th order Q-function expansions. This is because  $J(\pi)$ integrates $Q^\pi$ over the initial $\pi_0$ and the initial difference $\pi_0-\mu_0$ contributes one order of difference in $L_k(\pi,\mu)$.

Below, we illustrate the results for $k=1,2,$ and $k\geq 3$. To make the results more intuitive, we convert the matrix notation of Eq.\,\ref{eq:pgobjective} into explicit expectations under~$\mu$.



\paragraph{First-order expansion.} 
By converting $L_1(\pi,\mu)$ from Eq.\,\ref{eq:rlorders} into expectations, we get
\begin{align}
        \!\!\!\!\mathop{\mathbb{E}}_{ \substack{x,a\sim d_\gamma^\mu(\cdot,\cdot|x_0,a_0,0), \\a_0 \sim \mu(\cdot|x_0)}} \left[\left(\frac{\pi(a|x)}{\mu(a|x)}-1\right)Q^\mu(x,a)\right].\label{eq:firstorder}
\end{align}
To be precise $L_1(\pi,\mu) = (1-\gamma)^{-1}\times$ (Eq.\,\ref{eq:firstorder}) to account for the normalization of the distribution $d_\gamma^\mu$.  Note that $L_1(\pi,\mu)$
is exactly the same as surrogate objective proposed in prior work on scalable policy optimization
\citep{kakade2002approximately,schulman2015trust,schulman2017proximal}. Indeed, these works proposed to estimate and optimize such a surrogate objective at each iteration while enforcing a trust region. In the following, we generalize this objective with Taylor expansions.

\paragraph{Second-order expansion.} By converting $L_2(\pi,\mu)$ from Eq.\,\ref{eq:rlorders} into expectations, we get 
\begin{align}
     \mathop{\mathbb{E}}_{ \substack{x,a\sim d_\gamma^\mu(\cdot,\cdot|x_0,a_0,0), \\a_0 \sim \mu(\cdot|x_0)\\ x^\prime,a^\prime\sim d_\gamma^\mu(\cdot,\cdot|x,a,1) }} \left[\!\!\left(\frac{\pi(a|x)}{\mu(a|x)}\!\!-\!\!1\right)\left(\frac{\pi(a^\prime|x^\prime)}{\mu(a^\prime|x^\prime)}\!\!-\!\!1\right)Q^\mu\left(x^\prime,a^\prime\right)\right].\label{eq:secondorder}
\end{align}
Again, accounting for the normalization, $L_2(\pi,\mu) = \gamma(1-\gamma)^{-2}\times$(Eq.\,\ref{eq:secondorder}). To calculate the above expectation, we first start from $(x_0,a_0)$, and sample a pair $(x,a)$ from the discounted distribution $d_\gamma^\mu(\cdot,\cdot|x_0,a_0,0)$. Then, we use $(x,a)$ as the starting point and sample another pair from $d_\gamma^\mu(\cdot,\cdot|x,a,1)$. This implies that the second-order expansion can be estimated only via samples under $\mu$, which will be essential for policy optimization in practice.

It is worth noting that the second state-action pair  $(x^\prime,a^\prime) \sim d_\gamma^\mu(\cdot,\cdot|x,a,1)$ with the argument $\tau=1$ instead of $\tau=0$. This is because $L_k(\pi,\mu),k\geq 2$ only contains terms $\pi_t/\mu_t-1$ sampled across strictly different time steps.
\paragraph{Higher-order expansions.} Similarly to the first-order and second-order expansions, higher-order expansions are also possible by including proper higher-order terms in $\pi_t / \mu_t - 1$. For general $K\geq 1$, $L_K(\pi,\mu)$ can be expressed as (omitting the normalization constants)
\begin{align}
    \mathbb{E}_{(x^{(i)},a^{(i)})_{1\leq i\leq K}}\left[\prod_{i=1}^K \left(\frac{\pi(a^{(i)}|x^{(i)})}{\mu(a^{(i)} | x^{(i)})} -1 \right) Q^{\mu}(x^{(K)},a^{(K)})\right].
    \label{eq:orderk}
\end{align}
Here, $(x^{(i)},a^{(i)}),1\leq i\leq K$ are sampled sequentially, each following a discounted visitation distribution conditional on the previous state-action pair. We show their detailed derivations in Appendix \ref{appendix:taylorexpansionrl}.
Furthermore, we discuss the trade-off of different orders $K$ in Section~\ref{sec:TayPO}.


\paragraph{Interpretation \& intuition.} 
Evaluating $J(\pi)$ with data under $\mu$ requires importance sampling (IS) $J(\pi) = \mathbb{E}_{\mu,x_0}[(\Pi_{t\geq 0}\frac{\pi_t}{\mu_t})(\sum_{t\geq 0} \gamma^t r_t) ]$. In general, since $\pi$ can differ from $\mu$ at all $|\mathcal{X}||\mathcal{A}|$ state-action pairs, 
computing $J(\pi)$ exactly with full IS requires corrections at all steps along generated trajectories. First-order expansion (Eq.\,\ref{eq:firstorder}) corresponds to carrying out only one single correction at sampled state-action pair along the trajectories: Indeed, in computing Eq.\,\ref{eq:firstorder}, we sample a state-action pair $(x,a)$ along the trajectory and calculate one single IS correction $(\pi(a|x) / \mu(a|x) - 1)$. Similarly, the second-order expansion  (Eq.\,\ref{eq:secondorder}) goes one step further and considers the IS correction at two different steps $(x,a)$ and $(x^\prime,a^\prime)$. As such, Taylor expansions of the RL objective can be interpreted as increasingly tight approximations of the full IS correction. 

\section{Taylor expansion for policy optimization}
\label{sec:TayPO}
In high-dimensional policy optimization,  where exact algorithms such as dynamic programming are not feasible, it is necessary to learn from sampled data. In general, the sampled data are collected under a behavior policy $\mu$ different from the target policy $\pi$. For example, in trust-region policy search (e.g., TRPO, \citealp{schulman2015trust}; PPO, \citealp{schulman2017proximal}), $\pi$ is the new policy while $\mu$ is a previous policy; in asynchronous distributed algorithms \citep{mnih2016asynchronous,espeholt2018impala,horgan2018distributed,kapturowski2018recurrent}, $\pi$ is the learner policy while $\mu$ is delayed actor policy. In this section, we show the fundamental connection between trust-region policy search and Taylor expansions, and propose the general framework of \emph{Taylor expansion policy optimization } (TayPO).

 
\subsection{Generalized trust-region policy pptimization}
For policy optimization, it is necessary that the update function (e.g., policy gradients or surrogate objectives) can be estimated with sampled data under behavior policy $\mu$. Taylor expansions are a natural paradigm to satisfy this requirement. Indeed, to optimize $J(\pi)$, consider optimizing\footnote{Once again, the equality $J(\pi) = J(\mu) + \sum_{k=1}^\infty L_k(\pi,\mu)$ holds under certain conditions, detailed in Section 4.} 
\begin{align}
    \max_\pi\  J(\pi) = \max_\pi\  J(\mu) + \sum_{k=1}^\infty L_k(\pi,\mu).
    \label{eq:taylorobjective}
\end{align}
Though we have shown that for all $k,$ $L_k(\pi,\mu)$ are expectations under $\mu$, it is not feasible to unbiasedly estimate the  RHS of Eq.\,\ref{eq:taylorobjective} because it involves an infinite number of terms. In practice, we can truncate the objective up to $K$-th order $\sum_{k=1}^K L_k(\pi,\mu)$ and drop $J(\mu)$ because it does not involve~$\pi.$

However, for any fixed $K$, optimizing the truncated objective $\sum_{k=1}^K L_k(\pi,\mu)$ in an unconstrained way is risky: As $\pi,\mu$ become increasingly different, the approximation $J(\mu) + \sum_{k=1}^K L_k(\pi,\mu) \approx J(\pi)$ becomes more inaccurate and we stray away from optimizing $J(\pi),$ the objective of interest. The approximation error comes from the residual $E_K \triangleq \sum_{k=K+1}^\infty U_k$ --- to control the magnitude of the residual, it is natural  
to constrain $||\pi-\mu||_1 \leq \epsilon$ with some $\epsilon > 0$. Indeed, it is straightforward to show that \[||E_K||_\infty \leq \left(\frac{\gamma\epsilon}{1-\gamma} \right)^{K+1} \left(1-\frac{\gamma\epsilon}{1-\gamma}\right)^{-1} \frac{R_{\text{max}}}{1-\gamma}\CommaBin\] 
where $R_{\text{max}} \triangleq \max_{x,a} |r(x,a)|.$\footnote{Here we define $||E||_\infty \triangleq \max_{x,a} |E(x,a)|$.} Please see Appendix \ref{appendix:generalizedtrpo-residuals} for more detailed derivations. We formalize the entire local optimization problem as \emph{generalized trust-region policy optimization} (generalized TRPO),
\begin{equation}
   \max_\pi\  \sum_{k=1}^K L_k(\pi,\mu), \ \ 
   ||\pi-\mu||_1 \leq \epsilon. 
   \label{eq:generalizedtrpo}
\end{equation}
\paragraph{Monotonic improvement.} While maximizing the surrogate objective under trust-region constraints (Eq.\,\ref{eq:generalizedtrpo}), it is desirable to have performance guarantee on the true objective $J(\pi)$.
Below, Theorem \ref{thm:monotonic} gives such a result.
\begin{theorem}\label{thm:monotonic}(proved in Appendix \ref{sec:proofmonotonic}) When the policy $\pi$ is optimized based on the trust-region objective Eq.\,\ref{eq:generalizedtrpo} and $\epsilon < \frac{1-\gamma}{\gamma}$, the performance $J(\pi)$ is lower bounded as
\begin{align}
    J(\pi) &\geq J(\mu) + \sum_{k=1}^K L_k - G_K, \label{eq:monotonic} \\ \text{where}\  G_K &\triangleq \frac{1}{\gamma(1-\gamma)}\left(1-\frac{\gamma}{1-\gamma}\epsilon\right)^{-1}\left(\frac{\gamma\epsilon}{1-\gamma}\right)^{K+1} R_\text{max}.\notag
\end{align}
\end{theorem}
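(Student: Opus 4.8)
The plan is to start from the exact decomposition in Theorem~\ref{thm:Taylor}, namely $J(\pi) - J(\mu) = \sum_{k=1}^\infty L_k(\pi,\mu)$, which under the assumption $\epsilon < (1-\gamma)/\gamma$ (guaranteeing $\|\pi-\mu\|_1 < (1-\gamma)/\gamma$) holds as a convergent series. Splitting off the first $K$ terms, we get
\begin{align}
J(\pi) - J(\mu) = \sum_{k=1}^K L_k(\pi,\mu) + \sum_{k=K+1}^\infty L_k(\pi,\mu), \notag
\end{align}
so it suffices to lower-bound the tail $\sum_{k=K+1}^\infty L_k(\pi,\mu)$ by $-G_K$. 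The first step is therefore to relate this tail of the \emph{objective} expansion to the residual $E_K = \sum_{k=K+1}^\infty U_k$ of the \emph{Q-function} expansion, using Eq.~\ref{eq:rlorders}: since $L_k = (\pi_0-\mu_0)^\mathsf{T} U_{k-1} + \mu_0^\mathsf{T} U_k$ for $k\geq 2$, summing over $k \geq K+1$ telescopes into something like $(\pi_0-\mu_0)^\mathsf{T} E_K + \mu_0^\mathsf{T} E_{K+1} + \text{(correction from the boundary term } \mu_0^\mathsf{T} U_{K+1})$, which I can regroup as $\pi_0^\mathsf{T} E_K - \mu_0^\mathsf{T}(E_K - E_{K+1}) = \pi_0^\mathsf{T} E_K - \mu_0^\mathsf{T} U_{K+1}$, or more cleanly simply bound each piece by its $\infty$-norm times the total variation of the relevant initial distribution.

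The second step is to plug in the residual bound already established in the text: $\|E_K\|_\infty \leq \big(\tfrac{\gamma\epsilon}{1-\gamma}\big)^{K+1}\big(1-\tfrac{\gamma\epsilon}{1-\gamma}\big)^{-1}\tfrac{R_{\max}}{1-\gamma}$. Since $\pi_0$ and $\mu_0$ are probability distributions (concentrated appropriately at $x_0$), $|\pi_0^\mathsf{T} E_K| \leq \|E_K\|_\infty$ and similarly $|\mu_0^\mathsf{T} U_{K+1}| \leq \|U_{K+1}\|_\infty$. A bound on $\|U_{K+1}\|_\infty$ follows from the same geometric estimate used for $E_K$ (indeed $\|U_{K+1}\|_\infty \leq \big(\tfrac{\gamma\epsilon}{1-\gamma}\big)^{K+1}\tfrac{R_{\max}}{1-\gamma}$, since $\|\gamma(I-\gamma P^\mu)^{-1}(P^\pi-P^\mu)\|_\infty \leq \tfrac{\gamma\epsilon}{1-\gamma}$ and $\|Q^\mu\|_\infty \leq \tfrac{R_{\max}}{1-\gamma}$). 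Combining, the tail is bounded in absolute value by a constant times $\big(\tfrac{\gamma\epsilon}{1-\gamma}\big)^{K+1}$; one then checks that the resulting constant matches (or is dominated by) $G_K = \tfrac{1}{\gamma(1-\gamma)}\big(1-\tfrac{\gamma}{1-\gamma}\epsilon\big)^{-1}\big(\tfrac{\gamma\epsilon}{1-\gamma}\big)^{K+1}R_{\max}$, possibly after absorbing the $\|U_{K+1}\|_\infty$ term into the geometric-series constant.

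The main obstacle is bookkeeping the boundary term correctly: the sum $\sum_{k\geq K+1} L_k$ mixes $U_{k-1}$ and $U_k$, so one index shift produces an extra $U_{K+1}$ (or $E_K$ versus $E_{K+1}$) term that must be tracked carefully to land exactly on the stated constant $G_K$ rather than something off by a factor. A secondary subtlety is justifying the operator-norm bound $\|\gamma(I-\gamma P^\mu)^{-1}(P^\pi-P^\mu)\|_\infty \leq \gamma\epsilon/(1-\gamma)$: this uses that $P^\mu$ is a stochastic matrix (so $\|(I-\gamma P^\mu)^{-1}\|_\infty \leq (1-\gamma)^{-1}$) and that $\|P^\pi - P^\mu\|_\infty = \max_{x,a}\sum_{y,b} p(y|x,a)|\pi(b|y)-\mu(b|y)| \leq \|\pi-\mu\|_1 \leq \epsilon$, exactly the ingredient behind the convergence radius in Theorem~\ref{thm:Taylor}. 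Once these norm estimates are in hand, the rest is a routine geometric-series summation, and the theorem follows by noting $\sum_{k=1}^K L_k(\pi,\mu) + \sum_{k=K+1}^\infty L_k(\pi,\mu) = J(\pi)-J(\mu) \geq \sum_{k=1}^K L_k(\pi,\mu) - G_K$.
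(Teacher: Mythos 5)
Your proposal is correct and takes essentially the same route as the paper, which writes $J(\pi)-J(\mu)=\sum_{k\ge 1}L_k$, bounds each tail term by $|L_k|\le\epsilon\,\|U_{k-1}\|_\infty+\|U_k\|_\infty$ with $\|U_k\|_\infty\le\left(\gamma\epsilon/(1-\gamma)\right)^{k}R_{\max}/(1-\gamma)$, and sums the geometric series to land exactly on $G_K$. The only care needed is the one you flag yourself: the correctly telescoped tail is $(\pi_0-\mu_0)\transpose U_K+\pi_0\transpose E_K$ (your indices are shifted by one), and the factor $\epsilon$ coming from $\|\pi_0-\mu_0\|_1\le\epsilon$ must stay attached to the $U_K$-type piece---which your ``$\infty$-norm times total variation of the relevant initial distribution'' bound does, and which is precisely the paper's termwise estimate---since bounding both pieces by sup-norms alone would leave the constant one power of $\gamma\epsilon/(1-\gamma)$ short of the stated $G_K$.
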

Note that if $\epsilon < (1-\gamma)/\gamma,$ then as $K\rightarrow \infty$, the gap $G_K \rightarrow 0$. Therefore, when optimizing $\pi$ based on Eq.\,\ref{eq:generalizedtrpo}, the performance $J(\pi)$ is always lower-bounded according to Eq.\,\ref{eq:monotonic}.

\paragraph{Connections to prior work on trust-region policy search.} The generalized TRPO extends the formulation of prior work, e.g., TRPO/PPO of \citet{schulman2015trust,schulman2017proximal}. Indeed, idealized forms of these algorithms are a special case for $K=1$, though for practical purposes the $\ell_1$ constraint is replaced by averaged KL constraints.\footnote{Instead of forming the constraints explicitly, PPO \citep{schulman2017proximal} enforces the constraints implicitly by clipping IS ratios.} 


\subsection{TayPO-$k$: Optimizing with $k$-th order expansion}
Though there is a theoretical motivation to use trust-region constraints for policy optimization \citep{schulman2015trust,abdolmaleki2018maximum}, such constraints are rarely explicitly enforced in practice in its most standard form (Eq.\,\ref{eq:generalizedtrpo}). Instead, trust regions are \textit{implicitly encouraged} via e.g., ratio clipping \citep{schulman2017proximal} or parameter averaging \citep{wang2016sample}. In large-scale distributed settings, algorithms already benefit from diverse sample collections for variance reduction of the parameter updates \citep{mnih2016asynchronous,espeholt2018impala}, which brings the desired stability for learning and makes trust-region constraints less necessary (either explicit or implicit). Therefore, we focus on the setting where no trust region is explicitly enforced. We introduce a new family of algorithm \textbf{TayPO-$k$}, which applies the $k$-th order Taylor expansions for policy optimization.


\paragraph{Unbiased estimations with variance reduction.} In practice, $L_k(\pi_\theta, \mu)$ as expectations under $\mu$ can be estimated as $\hat{L}_k(\pi_\theta,\mu)$ over a single trajectory. Take $K=2$ as an example: Given a trajectory $(x_t,a_t,r_t)_{t=0}^{\infty}$ by $\mu$, assume we have access to some estimates of $Q^\mu(x,a)$, e.g., cumulative returns. To generate a sample from $(x,a) \sim d_\gamma^\mu(x_0,a_0,0)$, we can first sample a random time from a geometric distribution with success probability $1-\gamma$, i.e., $t \sim \text{Geometric}(1-\gamma)$. Second, we sample another random time $t^\prime$ with geometric distribution $\text{Geometric}(1-\gamma)$ but conditional on $t^\prime \geq 1$.\footnote{As explained in Section~\ref{ss:tayex}, since $L_2(\pi,\mu)$ contains IS ratios at strictly different time steps, it is required that $t^\prime \geq 1$.}
Then, a single sample estimate of Eq.\,\ref{eq:secondorder} is given by
\begin{align*}
    \left(\frac{\pi(a_t|x_t)}{\mu(a_t|x_t)}-1\right)\left(\frac{\pi(a_{t+t^\prime}|x_{t+t^\prime})}{\mu(a_{t+t^\prime}|x_{t+t\prime})}-1\right) Q^\mu(x_{t+t^\prime},a_{t+t^\prime}).
\end{align*}
Further, the following shows the effect of replacing Q-values $Q^\mu(x,a)$ by advantages $A^\mu(x,a)\triangleq Q^\mu(x,a) - V^\mu(x)$.
\begin{theorem}\label{thm:variancereduction}

(proved in Appendix \ref{sec:proofvariancereduction}) The computation of $L_k(\pi,\mu)$ based on Eq.\,\ref{eq:orderk} is exact when replacing $Q^\mu(x,a)$ by $A^\mu(x,a)$, i.e. $L_k(\pi,\mu),k\geq 1$ can be expressed as
\begin{align}
\mathbb{E}_{(x^{(i)},a^{(i)})_{1\leq i\leq K}}\left[\prod_{i=1}^K \left(\frac{\pi(a^{(i)}|x^{(i)})}{\mu(a^{(i)} | x^{(i)})} -1 \right) A^{\mu}(x^{(K)},a^{(K)})\right].
\nonumber
\end{align}
\end{theorem}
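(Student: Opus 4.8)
The plan is to show that replacing $Q^\mu$ by $A^\mu = Q^\mu - V^\mu$ in Eq.\,\ref{eq:orderk} changes nothing, i.e.\ that the subtracted term
\[
\mathbb{E}_{(x^{(i)},a^{(i)})_{1\leq i\leq K}}\left[\prod_{i=1}^K \left(\frac{\pi(a^{(i)}|x^{(i)})}{\mu(a^{(i)} | x^{(i)})} -1 \right) V^{\mu}(x^{(K)})\right]
\]
vanishes. The key observation is that $V^\mu(x^{(K)})$ depends only on the \emph{state} $x^{(K)}$, not on the action $a^{(K)}$, and that the last state-action pair is drawn as $a^{(K)} \sim \mu(\cdot\,|\,x^{(K)})$ conditionally on $x^{(K)}$ (this is how $d_\gamma^\mu(\cdot,\cdot|x^{(K-1)},a^{(K-1)},\tau)$ generates its action marginal). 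Therefore I would condition on everything except $a^{(K)}$ and take the inner expectation over $a^{(K)} \sim \mu(\cdot|x^{(K)})$ first.

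The core computation is then the scalar identity: for any fixed state $x$,
\[
\mathbb{E}_{a \sim \mu(\cdot|x)}\left[\frac{\pi(a|x)}{\mu(a|x)} - 1\right] = \sum_a \mu(a|x)\left(\frac{\pi(a|x)}{\mu(a|x)} - 1\right) = \sum_a \pi(a|x) - \sum_a \mu(a|x) = 1 - 1 = 0.
\]
Since the remaining factors $\prod_{i=1}^{K-1}\bigl(\pi(a^{(i)}|x^{(i)})/\mu(a^{(i)}|x^{(i)}) - 1\bigr)$ and $V^\mu(x^{(K)})$ are all measurable with respect to the conditioning $\sigma$-algebra (they do not involve $a^{(K)}$), the tower property gives that the whole subtracted expectation is zero. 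Hence $L_k(\pi,\mu)$ computed with $A^\mu$ equals $L_k(\pi,\mu)$ computed with $Q^\mu$, which by Eq.\,\ref{eq:orderk} is the true $k$-th order term. In matrix notation the same argument reads: $V^\mu$ corresponds to a vector constant across actions within each state, and left-multiplying by the row structure coming from $(D_{\pi/\mu} - I)$ after a $P^\mu$ step annihilates it because each row of $P^\mu$ sums appropriately and $\sum_a(\pi(a|x)/\mu(a|x) - 1)\mu(a|x) = 0$; I would present whichever of the two (expectation form or matrix form) matches the appendix derivation of Eq.\,\ref{eq:orderk} most cleanly, and for $k=1$ check the boundary case separately using Eq.\,\ref{eq:rlorders} and the fact that $(\pi_0 - \mu_0)^\transpose V^\mu$-type terms also cancel by the same summation identity.

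The main obstacle is bookkeeping rather than mathematics: I must make precise that in the sampling scheme behind Eq.\,\ref{eq:orderk}, the action $a^{(K)}$ is genuinely drawn last and only conditionally on $x^{(K)}$, so that the conditional expectation over $a^{(K)}$ is legitimately the innermost one. This requires unpacking the definition of $d_\gamma^\mu(\cdot,\cdot|x_0,a_0,\tau)$, which factors as a discounted state-visitation distribution times $\mu(a|x)$, so the action marginal is indeed $\mu(\cdot|x)$ given the state — this is exactly the structure needed. Once that is spelled out, the cancellation is immediate, and no convergence subtleties arise since we are manipulating a single fixed-order term $L_k$, not the infinite sum.
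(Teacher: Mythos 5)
Your proposal is correct and follows essentially the same route as the paper's proof: condition on everything up to the last sampled pair, use the tower property, and exploit that $V^\mu(x^{(K)})$ is action-independent together with the identity $\sum_a \mu(a|x)\left(\pi(a|x)/\mu(a|x)-1\right)=0$ so that the baseline term vanishes. The only cosmetic difference is that you make this cancellation explicit by conditioning additionally on $x^{(K)}$, whereas the paper leaves it implicit inside the conditional expectation over $(x^{(K)},a^{(K)})$.
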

In practice, when computing  $\hat{L}_k(\pi,\mu)$, replacing $\hat{Q}^\mu(x,a)$ by $\hat{A}^\mu(x,a)$ still produces an  unbiased estimate \textit{and} potentially reduces variance. This naturally recovers the result in prior work for $K=1$ \citep{schulman2015high}.


\paragraph{Higher-order objectives and trade-offs.} When $K\geq 3$, we can construct objectives with higher-order terms. The motivation is that with high $K$, $\sum_{k=1}^K L_k(\pi_\theta,\mu)$ forms a closer approximation to the objective of interest: $J(\pi) - J(\mu)$. Why not then have $K$ as large as possible? This comes at a trade-off. For example, let us compare $L_1(\pi_\theta,\mu)$ and $L_1(\pi_\theta,\mu) + L_2(\pi_\theta,\mu)$: Though $L_1(\pi_\theta,\mu) + L_2(\pi_\theta,\mu)$ forms a closer approximation to $J(\pi) - J(\mu)$ than $L_1(\pi)$ \emph{in expectation},
it could have higher variance during estimation when e.g., $L_1(\pi_\theta,\mu)$ and $L_2(\pi_\theta,\mu)$ have a non-negative correlation.
Indeed, as $K\rightarrow \infty$, $\sum_{k=1}^K L_k(\pi_\theta,\mu)$ approximates the full IS correction, which is known to have high variance \citep{munos2016safe}. 


\paragraph{How many orders to take in practice?}
Though the higher-order policy optimization formulation generalizes previous results \citep{schulman2015trust,schulman2017proximal} as an first-order special case, does it suffice to only include first-order terms in practice? 


\begin{figure}[t]
\includegraphics[width=8cm]{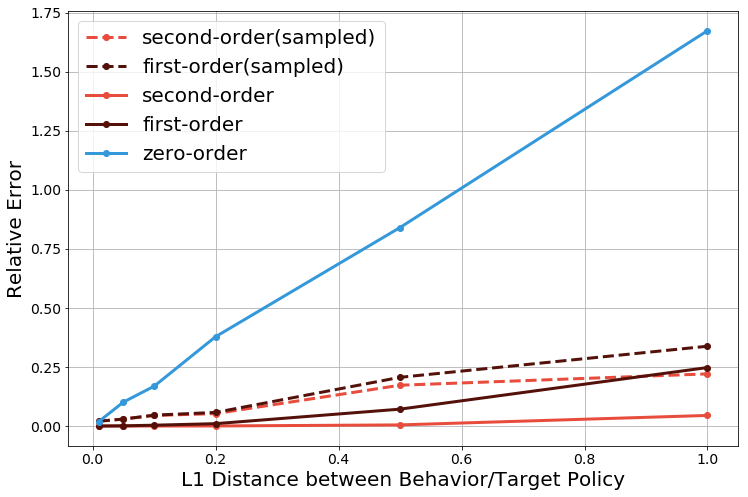}
\centering
\caption{Experiments on a small MDP. The $x$-axis measures $|\pi-\mu|_1$ and the $y$-axis shows the relative errors in off-policy estimates. All errors are computed analytically. Solid lines are computed with ground-truth rewards $R$ while dashed lines with estimates $\hat{R}$.}
\label{fig:smallmdp}
\end{figure}

To assess the effects of Taylor expansions, consider a policy evaluation problem on a random MDP (see Appendix \ref{appendix:experiment-randommdp} for the detailed setup): Given a target policy $\pi$ and a behavior policy $\mu$, the approximation error of the $K$-th order expansion is $e_K \triangleq Q^\pi - (Q^\mu + \sum_{k=1}^K U_k)$. In Figure \ref{fig:smallmdp}, We show the relative errors $||e_K||_1 / ||Q^\pi||_1$ 
as a function of $\epsilon=||\pi-\mu||_1$. Ground-truth quantities such as $Q^\pi$ are always computed analytically. Solid lines show results where all estimates are also computed analytically, e.g., $Q^\mu$ is computed as $(I-\gamma P^\mu)^{-1} R$. Observe that the errors decrease drastically as the expansion order $K\in\{0,1,2\}$ increases. 
To quantify how sample estimates impact the quality of approximations, we re-compute the estimates but with $R$ replaced by empirical estimates $\hat{R}$. Results are shown in dashed curves. Now comparing $K=1,2$, observe that both errors go up compared to their fully analytic counterparts - both become more similar when $\epsilon$ is small.


This provides motivations for second-order expansions. While first-orders are a default choice for common deep RL algorithms \citep{schulman2015trust,schulman2017proximal}, from the simple MDP example we see that the second-order expansions could potentially improve upon the first-order, \textit{ even with sample estimates}. 
\begin{algorithm}[h]
\begin{algorithmic}
\REQUIRE policy $\pi_\theta$ with parameter $\theta$ and $\alpha,\eta > 0$ \\
\WHILE{not converged}
\STATE 1. Collect partial trajectories $(x_t,a_t,r_t)_{t=1}^T$ under behavior policy $\mu$.
\STATE 2. Estimate on-policy advantage from the trajectories $\hat{A}^\mu(x_t,a_t)$.
\STATE 3. Construct first-order/second-order surrogate objective function $\hat{L}_1(\pi_\theta,\mu),  \hat{L}_2(\pi_\theta,\mu)$ according to Eq.\,\ref{eq:firstorder}, Eq.\,\ref{eq:secondorder} respectively, replacing $Q^\mu(x,a)$ by $\hat{A}^\mu(x,a)$.
\STATE 4. The full objective $\hat{L}_\theta \gets \hat{L}_1(\pi_\theta,\mu) + \hat{L}_2(\pi_\theta,\mu)$.
\STATE 5. Gradient update $\theta \gets \theta + \alpha \nabla_\theta \hat{L}_\theta$. 
\ENDWHILE
\caption{TayPO-$2$: Second-order policy optimization}
\end{algorithmic}
\end{algorithm}

\subsection{TayPO-$2$ --- Second-order policy optimization}
 From here onwards, we focus on TayPO-$2$. At any iteration, the data are collected under behavior policy $\mu$ in the form of partial trajectories $(x_t,a_t,r_t)_{t=1}^T$ of length $T$. The learner maintains a parametric policy $\pi_\theta$ to be optimized. First, we carry out advantage estimation $\hat{A}^\mu(x,a)$ for state-action pairs on the partial trajectories. This could be na\"ively estimated as $\hat{A}^\mu(x_t,a_t) = \sum_{t^\prime \geq t}^{T-1} r_{t^\prime}\gamma^{t^\prime-t} + V_\phi(x_T)\gamma^{T-t} - V_\phi(x_t)$ where $V_\phi(x)$ are value function baselines. One could also adopt more advanced estimation techniques such as \textit{generalized advantage estimation} (GAE, \citealp{schulman2015high}). Then, we construct surrogate objectives for optimization: the first-order component $\hat{L}_1(\pi_\theta,\mu)$ as well as second-order component $\hat{L}_2(\pi_\theta,\mu) - \hat{L}_1(\pi_\theta,\mu)$, based on Eq.\,\ref{eq:firstorder} and Eq.\,\ref{eq:secondorder} respectively. Note that we replace all $Q^\mu(x,a)$ by $\hat{A}^\mu(x,a)$ for variance reduction. 

Therefore, our final objective function becomes  
\begin{align}
\hat{L}_\theta \triangleq \hat{L}_1(\pi_\theta,\mu) + \hat{L}_2(\pi_\theta,\mu). \label{eq:TayPO2}
\end{align}
 The parameter is updated via gradient ascent $\theta \leftarrow \theta + \alpha \nabla \hat{L}_\theta$. 
 Similar ideas can be applied to \textit{value-based algorithms}, for which we provide details in Appendix \ref{appendix:secondorderr2d2}.

\section{Unifying the concepts: Taylor expansion as return-based off-policy evaluation}
\label{sec:uni}
So far we have made the connection between Taylor expansions and TRPO. On the other hand, as introduced in Section~\ref{sec:intro}, Taylor expansions can also be intimately related to \textit{off-policy evaluation}. Below, we formalize their connections. With Taylor expansions, we provide a consistent and unified view of TRPO and off-policy evaluation.


\subsection{Taylor expansion as off-policy evaluation}
In the general setting of off-policy evaluation, the data is collected under a behavior policy $\mu$ while the objective is to evaluate $Q^\pi$. \textit{Return-based off-policy evaluation operators} \citep{munos2016safe} are a family of operators $\mathcal{R}_c^{\pi,\mu}: \mathbb{R}^{|\mathcal{X}||\mathcal{A}|} \mapsto \mathbb{R}^{|\mathcal{X}||\mathcal{A}|}$, indexed by (per state-action) trace-cutting coefficients $c(x,a)$, a behavior policy $\mu$ and a target policy $\pi,$
$$
\mathcal{R}_c^{\pi,\mu}Q\triangleq Q+(I-\gamma P^{c\mu})^{-1}(r+\gamma P^{\pi}Q-Q), 
$$
where $P^{c\mu}$ is the \textit{(sub)-probability transition kernel} for policy $c(x',a')\mu(a'|x')$. 
Starting from any Q-function $Q$, repeated applications of the operator will result in convergence to $Q^\pi$, i.e.,
\begin{equation*}
    (\mathcal{R}_c^{\pi,\mu})^K Q \rightarrow Q^\pi,
\end{equation*}
as $K \rightarrow \infty$, subject to certain conditions on $c(x,a)$. To state the main results, recall that  Eq.\,\ref{eq:taylorexpansion} rewrites as 
$
    Q^\pi =  \mathop{\lim}_{ K\rightarrow \infty} \big( Q^\mu + \sum_{k=1}^K U_k \big).
$
In practice, we take a finite $K$ and use the approximation $Q^\mu + \sum_{k=1}^K U_k \approx Q^\pi$.

Next, we state the following result establishing a connection between $K$-th order Taylor expansion and the return-based off-policy operator applied $K$ times.
\begin{theorem}(proved in Appendix~\ref{sec:proofreturn})
\label{thm:off-policy}
For any $K\geq 1$, any policies $\pi$ and $\mu$,
\begin{align}
    Q^\mu + \sum_{k=1}^K U_k = (\mathcal{R}_1^{\pi,\mu})^K Q^\mu, \label{eq:equivalence}
\end{align}
where $\mathcal{R}_1^{\pi,\mu}$ is short for $c(x,a) \equiv 1$.
\end{theorem}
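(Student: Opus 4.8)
The plan is to prove Eq.\,\ref{eq:equivalence} by induction on $K$, using the explicit form of the operator $\mathcal{R}_1^{\pi,\mu}$ and the recursive structure of the $U_k$ terms. First I would simplify the operator with $c\equiv 1$: since $P^{1\cdot\mu} = P^\mu$ and $r + \gamma P^\pi Q - Q$ compared against the Bellman residual, we get
\begin{align}
\mathcal{R}_1^{\pi,\mu} Q = Q + (I-\gamma P^\mu)^{-1}\left(R + \gamma P^\pi Q - Q\right). \nonumber
\end{align}
A convenient reformulation: writing $R = (I-\gamma P^\mu)Q^\mu$ (the Bellman equation for $\mu$) and collecting terms, one obtains $\mathcal{R}_1^{\pi,\mu} Q = Q^\mu + \gamma(I-\gamma P^\mu)^{-1}(P^\pi - P^\mu)Q$. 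Introducing the shorthand $M \triangleq \gamma(I-\gamma P^\mu)^{-1}(P^\pi-P^\mu)$, this reads compactly as $\mathcal{R}_1^{\pi,\mu} Q = Q^\mu + M Q$, and note $U_k = M^k Q^\mu$.

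The base case $K=1$ is then immediate: $\mathcal{R}_1^{\pi,\mu} Q^\mu = Q^\mu + M Q^\mu = Q^\mu + U_1$. For the inductive step, assume $(\mathcal{R}_1^{\pi,\mu})^K Q^\mu = Q^\mu + \sum_{k=1}^K M^k Q^\mu$. Applying the operator once more and using $\mathcal{R}_1^{\pi,\mu} Q = Q^\mu + MQ$ with $Q = (\mathcal{R}_1^{\pi,\mu})^K Q^\mu$ gives
\begin{align}
(\mathcal{R}_1^{\pi,\mu})^{K+1} Q^\mu = Q^\mu + M\left(Q^\mu + \sum_{k=1}^K M^k Q^\mu\right) = Q^\mu + \sum_{k=1}^{K+1} M^k Q^\mu, \nonumber
\end{align}
which closes the induction and yields Eq.\,\ref{eq:equivalence}, since $\sum_{k=1}^{K+1} M^k Q^\mu = \sum_{k=1}^{K+1} U_k$.

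The only real subtlety — and the step I expect to need the most care — is the algebraic reduction of $\mathcal{R}_1^{\pi,\mu}Q$ to the clean affine form $Q^\mu + MQ$. This requires correctly substituting the Bellman identity $R = (I-\gamma P^\mu)Q^\mu$ and verifying the cancellation $(I-\gamma P^\mu)^{-1}(R - Q) + Q = Q^\mu + (I-\gamma P^\mu)^{-1}(Q - \gamma P^\mu Q - Q) + \ldots$; one must track that the $Q$ terms recombine so that the coefficient of $Q$ is exactly $M = \gamma(I-\gamma P^\mu)^{-1}(P^\pi - P^\mu)$ and the constant term is exactly $Q^\mu$, with nothing left over. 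Once that identity is established the induction is routine. Note also that — consistent with the theorem statement omitting any constraint on $\|\pi-\mu\|_1$ — this argument is purely algebraic and needs no convergence hypothesis, since it concerns the finite truncation for fixed $K$; the convergence condition only enters if one wishes to pass to the $K\to\infty$ limit on both sides.
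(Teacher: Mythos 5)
Your proposal is correct and follows essentially the same route as the paper: reduce $\mathcal{R}_1^{\pi,\mu}$ to the affine form $Q \mapsto Q^\mu + \gamma(I-\gamma P^\mu)^{-1}(P^\pi-P^\mu)Q$ via the Bellman identity, then iterate $K$ times (the paper unrolls the iteration for a general $Q$ before substituting $Q^\mu$, which is just your induction written out). Your closing remark that the identity is purely algebraic and needs no convergence condition is also consistent with the paper's statement and proof.
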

Theorem~\ref{thm:off-policy} shows that when we approximate $Q^\pi$ by the Taylor expansion up to the $K$-th order, $Q^\mu + \sum_{k=1}^K U_k$, it is equivalent to generating an approximation by  $K$ times applying the off-policy evaluation operator $\mathcal{R}_1^{\pi,\mu}$ on $Q^\mu$. We also note that the  off-policy evaluation operator in Theorem~\ref{thm:off-policy} is the $Q(\lambda)$ operator \cite{harutyunyan_QLambda_2016} with $\lambda=1$.\footnote{As a side note, we also show that the advatnage estimation method GAE \citep{schulman2015high} is highly related to the $Q(\lambda)$ operator in Appendix \ref{appendix:gae}.}

\paragraph{Alternative proof for $Q(\lambda)$ convergence for $\lambda=1$.} Since Taylor expansions converge within a convergence radius, which in this case corresponds to $||\pi-\mu||_1  <(1-\gamma)/\gamma$, it implies that $Q(\lambda)$ with $\lambda=1$ converges when this condition holds. In fact, this coincides with the condition deduced by \citet{harutyunyan_QLambda_2016}.\footnote{Note that this alternative proof only works for the case where the initial $Q_{\text{init}} = Q^\mu$.}

\subsection{An operator view of trust-region policy optimization} With the connection between Taylor expansion and off-policy evaluation, along with the connection between Taylor expansion and TRPO (Section~\ref{sec:TayPO}) we give a novel interpretation of TRPO:  The $K$-th order generalized TRPO is approximately equivalent to iterating $K$ times the off-policy evaluation operator $\mathcal{R}_1^{\pi,\mu}$.

To make our claim explicit, recall the RL objective in matrix form is $J(\pi) = \pi_0\transpose Q^\pi$. 
Now consider approximating $Q^\pi$ by applying the evaluation operator $\mathcal{R}_{1}^{\pi,\mu}$ to $Q^\mu$, iterating $K$ times. This produces the surrogate objective $\pi_0\transpose (\mathcal{R}_{1}^{\pi,\mu})^KQ^\mu
\approx J(\mu) + \sum_{k=1}^K L_k(\pi,\mu)$, approximately equivalent to that of the generalized TRPO (Eq.\,\ref{eq:generalizedtrpo}).\footnote{The $k$-th order Taylor expansion of $Q^\pi$ is slightly different from that of the RL objective $J(\pi)$ by construction; see Appendix~\ref{appendix:taylorexpansionq} for details.} As a result, the generalized TRPO (including TRPO; \citealp{schulman2015trust}) can be interpreted as approximating the exact RL objective $J(\pi$), by $K$ times iterating  the evaluation operator $\mathcal{R}_{1}^{\pi,\mu}$ on $Q^\mu$ to approximate $Q^\pi$. When does this evaluation operator converge? Recall that $\mathcal{R}_{1}^{\pi,\mu}$ converges when $||\pi - \mu||_1 < (1-\gamma)/\gamma$, i.e., there is a trust region constraint on $\pi,\mu$. This is consistent with the motivation of generalized TRPO discussed in Section~\ref{sec:TayPO}, where a trust region is required for monotonic improvements.

\section{Experiments}
\label{sec:exp}
We evaluate the potential benefits of applying second-order expansions in a diverse set of scenarios. In particular, we test if the second-order correction helps with \textbf{(1)} \textit{policy-based} and \textbf{(2)} \textit{value-based} algorithms.

 In large-scale experiments, to take advantage of computational architectures, actors ($\mu$) and learners ($\pi$) are not perfectly synchronized. For case~\textbf{(1)}, in Section \ref{onpolicy}, we show that even in cases where they almost synchronize ($\pi \approx \mu$), higher-order corrections are still helpful. Then, in Section \ref{offpolicy}, we study how the performance of a general distributed policy-based agent (e.g., IMPALA, \citealp{espeholt2018impala}) is influenced by the discrepancy between actors and learners. For case \textbf{(2)}, in Section~\ref{value},  we show the benefits of second-order expansions in with a state-of-the-art value-based agent R2D2 \citep{kapturowski2018recurrent}.

\paragraph{Evaluation.} All evaluation environments are done on the entire suite of Atari games \citep{bellemare2013arcade}. We report human-normalized scores for each level, calculated as $z_i = (r_i - o_i) / (h_i - o_i)$, where $h_i$ and $o_i$ are the performances of human and a random policy on level $i$ respectively; with details in Appendix~\ref{appendix:experiment-evaluation}.


\paragraph{Architecture for distributed agents.} Distributed agents generally consist of a central learner and multiple actors \citep{nair2015massively,mnih2016asynchronous,babaeizadeh2016reinforcement,barth2018distributed,horgan2018distributed}. We focus on two main setups: \textbf{Type I} includes agents such as IMPALA \citep{espeholt2018impala} (see blue arrows in Figure~\ref{fig:architecture} in Appendix~\ref{appendix:experiment-architecture}). See Section~\ref{onpolicy} and Section~\ref{ss:gendist}; \textbf{Type II} includes agents such as R2D2 (\citealp{kapturowski2018recurrent}; see orange arrows in Figure~\ref{fig:architecture} in Appendix~\ref{appendix:experiment-architecture}). See Section~\ref{ss:disvb}. 
We provide details on hyper-parameters of experiment setups in respective subsections in Appendix \ref{appendix:experiment}.

\paragraph{Practical considerations.}
We can extend the TayPO-$2$ objective (Eq.\,\ref{eq:TayPO2}) to $\hat{L}_\theta = \hat{L}_1(\pi_\theta,\mu) + \eta\hat{L}_2(\pi_\theta,\mu)$ with $\eta > 0$. By choosing $\eta$, one achieves bias-variance trade-offs of the final objective and hence the update. We found $\eta=1$ (exact TayPO-$2$) working reasonably well. See Appendix \ref{appendix:experiment-taypo2} for the ablation study on~$\eta$ and further details.



\subsection{Near on-policy policy optimization}
\label{onpolicy}


The policy-based agent maintains a target policy network $\pi = \pi_\theta$ for the learner and a set of behavior policy  networks
$\mu = \pi_{\theta^\prime}$ for the actors. The actor parameters $\theta^\prime$ are delayed copies of the learner parameter~$\theta$. To emulate a near on-policy situation $\pi \approx \mu$, we minimize the delay of the parameter passage between the central learner and actors, by hosting both learner/actors on the same machine. 

We compare second-order expansions with two baselines: \textit{first-order} and \textit{zero-order}. For the first-order baseline, we also adopt the PPO technique of clipping: $\text{clip}(\pi(a|x)/\mu(a|x), 1-\epsilon,1+\epsilon)$ in Eq.\,\ref{eq:firstorder} with $\epsilon=0.2$. Clipping the ratio enforces an implicit trust region  with the goal of increased stability \citep{schulman2017proximal}. This technique has been shown to generally outperform a na\"ive explicit constraint, as done in the original TRPO \citep{schulman2015trust}. In Appendix~\ref{appendix:experiment-nearonpolicy}, we detail how we implemented PPO on the asynchronous architecture. Each baseline trains on the entire Atari suite for $400$M frames and we compare the mean/median human-normalized scores.

The comparison results are shown in Figure~\ref{fig:onpolicy}. Please see the median score curves in  Figure~\ref{fig:onpolicy_median} in Appendix~\ref{appendix:experiment-nearonpolicy}. We make several observations: \textbf{(1)} Off-policy corrections are very critical. Going from zero-order (no correction) to first-order improves the performance most significantly, even when the delays between actors and the learner are minimized as much as possible; \textbf{(2)} Second-order correction significantly improves on the first-order baseline. This might be surprising, because when near on-policy, one should expect the difference between additional second-order correction to be less important. This implies that in fully asynchronous architecture, it is challenging to obtain sufficiently on-policy data and additional corrections can be helpful.

\begin{figure}[t]
\includegraphics[width=8cm]{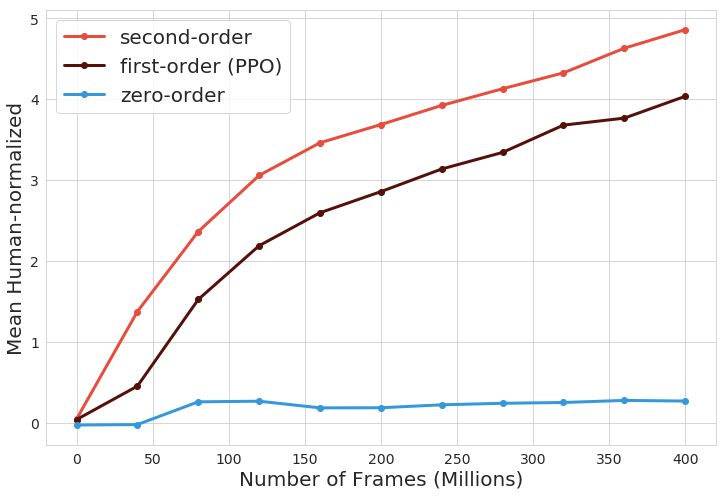}
\centering
\caption{\textbf{Near on-policy optimization.} The x-axis is the number of frames (millions) and y-axis shows the mean human-normalized scores averaged across 57 Atari levels. The plot shows the mean curve averaged across 3 random seeds. We observe that second-order expansions allow for faster learning and better asymptotic performance given the fixed budget on actor steps.}
\label{fig:onpolicy}
\end{figure}

\subsection{Distributed off-policy policy optimization}
\label{ss:gendist}

We adopt the same setup as in Section~\ref{onpolicy}.
To maximize the overall throughput of the agent, the central learner and actors are distributed on different host machines. As a result, both parameter passage from the learner to actors and data passage from actors to the learner could be severely delayed. This creates a natural off-policy scenario with $\pi \neq \mu$.


We compare second-order with two baselines: \emph{first-order} and \emph{V-trace}. The V-trace is used in the original IMPALA agent \citep{espeholt2018impala} and we present its details in Appendix~\ref{appendix:experiment-offpolicypg}. We are interested in how the agent's performance changes as the level of off-policy increases. In practice, the level of off-policy can be controlled and measured as the delay (measured in milliseconds) of the parameter passage from the learner to actors. Results are shown in Figure~\ref{fig:impala}, where x-axis shows the artificial delays (in $\log$ scale) and y-axis shows the mean human-normalized scores after training for $400$M frames. Note that the total delay consists of both artificial delays and inherent delays in the distributed system.

We make several observations: \textbf{(1)} All baseline variants' performance degrades as the delays increase. All baseline off-policy corrections are subject to failures as the level of off-policines increases. \textbf{(2)} While all baselines perform rather similarly when delays are small, as the level of off-policy increases, second-order correction degrades slightly more gracefully than the other baselines. This implies that second-order is a more robust off-policy correction method than other current alternatives.

\label{offpolicy}
\begin{figure}[t]
\includegraphics[width=8cm]{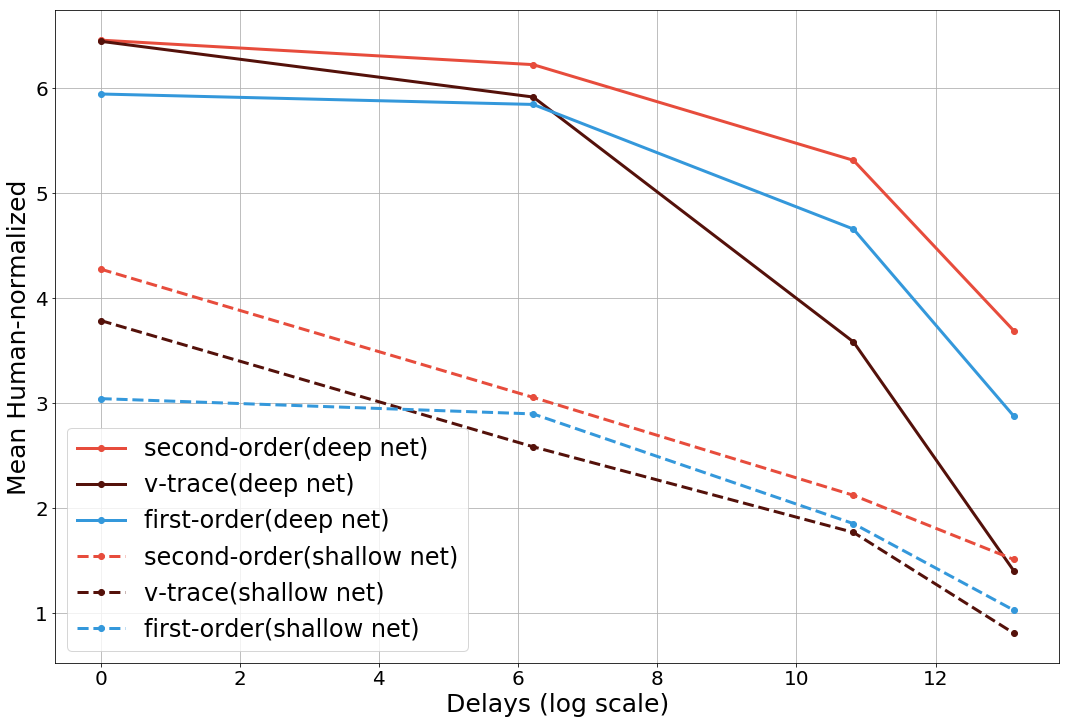}
\centering
\caption{\textbf{Distributed off-policy policy optimization.} The x-axis is the controlled delays between the actors and learner (in log scale) and y-axis shows the mean human-normalized scores averaged across 57 Atari levels after training for 400M frames. Each curve averages across 3 random seeds. Solid curves are results trained with resnets while dashed curves are trained with shallow nets second-order expansions make little difference compared to baselines (V-trace and first-order) when the delays are small. When delays increase, the performance of second-order expansions decay more slowly.}
\label{fig:impala}
\end{figure}
\subsection{Distributed value-based learning}
\label{ss:disvb}

The value-based agent maintains a Q-function network 
$Q_\theta$ for the learner and a set of delayed Q-function networks $Q_{\theta^\prime}$ for the actors. Let $\mathcal{E}$ be an operator such that $\mathcal{E}(Q, \epsilon)$ returns the $\epsilon$-greedy policy with respect to $Q$. The actors generate partial trajectories by executing an $\mu = \mathcal{E}(Q_{\theta^\prime}, \epsilon)$ and send data to a replay buffer. The target policy is greedy with respect to the current Q-function $\pi = \mathcal{E}(Q_\theta, 0)$. The learner samples partial trajectories from the replay buffer and updates parameters by minimizing Bellman errors computed along sampled trajectories. Here we focus on R2D2, a special instance of distributed value-based agent. Please refer to \citet{kapturowski2018recurrent} for a complete review of all algorithmic details of value-based agents such as R2D2.

Across all baseline variants, the learner computes regression targets $Q_{\text{target}}(x,a) \approx Q^\pi(x,a)$ for the network to approximate $Q_\theta(x,a) \approx Q_\text{target}(x,a)$. The targets $Q_\text{target}(x,a)$ are calculated based on partial trajectories under $\mu$ which require off-policy corrections. We compare several correction variants: zero-order, first-order, Retrace \citep{munos2016safe,rowland2019adaptive} and second-order. Please see algorithmic details in Appendix \ref{appendix:secondorderr2d2}.

The comparison results are in Figure \ref{fig:r2d2} where we show the mean scores. We make several observations: \textbf{(1)} second-order correction leads to marginally better performance than first-order and retrace, and significantly better than zero-order. \textbf{(2)} In general, unbiased (or slightly biased) off-policy corrections do not yet perform as well as radically biased off-policy variants, such as \textit{uncorrected-nstep} \citep{kapturowski2018recurrent,rowland2019adaptive}. \textbf{(3)} Zero-order performs the worst --- though it is able to reach super human performance on most games as other variants but then the performance quickly plateaus. See Appendix~\ref{appendix:experiment-valuelearning} for more results.

\label{value}
\begin{figure}[t]
\includegraphics[width=8cm]{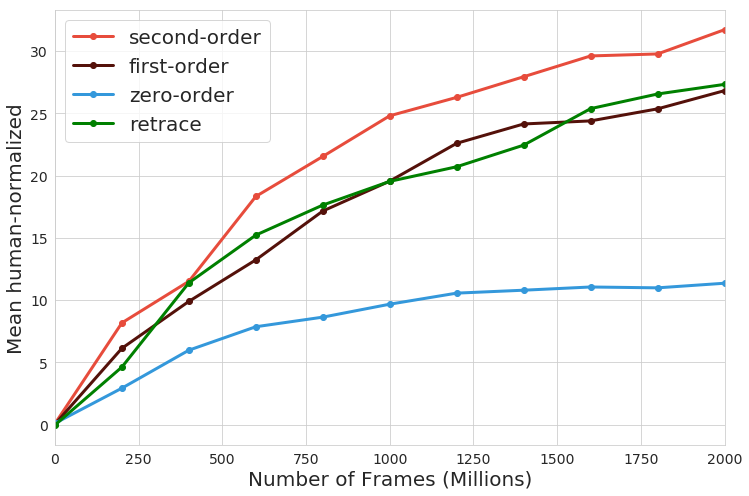}
\centering
\caption{\textbf{Value-based learning with distributed architecture (R2D2).} The x-axis is number of frames (millions) and y-axis shows the mean human-normalized scores averaged across 57 Atari levels over the training of 2000M frames. Each curve averages across 2 random seeds. The second-order correction performs marginally better than first-order correction and retrace, and significantly better than zero-order. See Appendix \ref{appendix:secondorderr2d2} for detailed descriptions of these baseline variants.}
\label{fig:r2d2}
\end{figure}

\section{Discussion and conclusion}
The idea of IS is the core of most off-policy evaluation techniques \citep{precup2000eligibility,harutyunyan_QLambda_2016,munos2016safe}. We showed that Taylor expansions construct approximations to the full IS corrections and hence intimately relate to established off-policy evaluation techniques. 

However, the connection between IS and policy optimization is less straightforward. Prior work focuses on applying off-policy corrections directly to
policy gradient estimators \citep{jie2010connection,espeholt2018impala} instead of the surrogate objectives which generate the gradients. Though standard policy optimization objectives \citep{schulman2015trust,schulman2017proximal} involve IS weights, their link with IS is not made explicit. Closely related to our work is that of \citet{tomczak2019policy}, where they identified such optimization objectives as biased approximations to the full IS objective \citep{metelli2018policy}. We characterized such approximations as the first-order special case of Taylor expansions and derived their natural generalizations.

In summary, we showed that Taylor expansions naturally connect trust-region policy search with off-policy evaluations. This new formulation unifies previous results, opens doors to new algorithms and bring significant gains to certain state-of-the-art deep RL agents.

\paragraph{Acknowledgements.} Great thanks to 
Mark Rowland for insightful discussions during the development of ideas as well as extremely useful feedbacks on earlier versions of this paper. The authors also thank Diana Borsa, Jean-Bastien Grill, Florent Altch\'e,  Tadashi Kozuno,  Zhongwen Xu, Steven Kapturowski, and Simon Schmitt for helpful discussions.



\bibliographystyle{apalike}
\bibliography{refs}

\appendix
\onecolumn

\section{Derivation of results for generalized trust-region policy optimization}
\label{appendix:generalizedtrpo}

\subsection{Controlling the residuals of Taylor expansions}
\label{appendix:generalizedtrpo-residuals}

We summarize the bound on the magnitude of the Taylor expansion residuals of the Q-function as a proposition.
\begin{proposition}

Recall the definition of the Taylor expansion residual of the Q-function from the main text, $E_K \triangleq \sum_{k=K+1}^\infty U_k$. Let $||\cdot||$ be the infinity norm $||A|| \triangleq \max_{x,a}|A(x,a)|$. Let $R_{\text{\rm max}}$ be the maximum reward in the entire MDP, $R_{\text{\rm max}} \triangleq \max_{x,a} |r(x,a)|$. Finally, let $\epsilon \triangleq ||\pi - \mu||_1$. Then 
\begin{align}
    ||E_K|| \leq \left(\frac{\gamma}{1-\gamma}\epsilon\right)^{K+1}\left(1-\frac{\gamma}{1-\gamma}\epsilon\right)^{-1} \frac{R_\text{\rm max}}{1-\gamma}\cdot
\end{align}

\end{proposition}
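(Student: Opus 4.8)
The plan is to bound $\|E_K\|$ directly from its definition $E_K = \sum_{k=K+1}^\infty U_k$ with $U_k = \left(\gamma (I-\gamma P^\mu)^{-1}(P^\pi - P^\mu)\right)^k Q^\mu$, using submultiplicativity of the operator norm induced by the infinity norm on vectors. First I would set $M \triangleq \gamma (I-\gamma P^\mu)^{-1}(P^\pi - P^\mu)$, so that $U_k = M^k Q^\mu$ and $\|U_k\| \le \|M\|^k \|Q^\mu\|$, where $\|M\|$ denotes the operator norm on $\mathbb{R}^{|\mathcal{X}||\mathcal{A}|}$ induced by $\|\cdot\|_\infty$ (i.e., the maximum absolute row sum). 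Then $\|E_K\| \le \sum_{k=K+1}^\infty \|M\|^k \|Q^\mu\| = \|M\|^{K+1}(1-\|M\|)^{-1}\|Q^\mu\|$ provided $\|M\| < 1$, which is the geometric series tail.

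The core estimate is the bound $\|M\| \le \frac{\gamma}{1-\gamma}\epsilon$. Here I would use three facts: (i) $\|(I-\gamma P^\mu)^{-1}\| \le \sum_{j\ge 0}\gamma^j\|P^\mu\|^j = (1-\gamma)^{-1}$, since $P^\mu$ is a stochastic matrix so its induced infinity-norm is $1$; (ii) $\|P^\pi - P^\mu\| = \max_{x,a}\sum_{y,b}|p(y|x,a)\pi(b|y) - p(y|x,a)\mu(b|y)| = \max_{x,a}\sum_y p(y|x,a)\sum_b|\pi(b|y)-\mu(b|y)| \le \max_y \sum_b |\pi(b|y)-\mu(b|y)| = \|\pi-\mu\|_1 = \epsilon$; and (iii) submultiplicativity of the operator norm, giving $\|M\| \le \gamma \cdot (1-\gamma)^{-1} \cdot \epsilon$. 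The hypothesis $\epsilon < (1-\gamma)/\gamma$ from the main text (implicit here, since otherwise the expansion does not converge) ensures $\|M\| < 1$ so that the geometric series converges and the limit $E_K$ is well-defined. For the remaining factor I would bound $\|Q^\mu\| \le R_{\max}/(1-\gamma)$, which follows from $Q^\mu = (I-\gamma P^\mu)^{-1}R$ together with fact (i) and $\|R\| = \max_{x,a}|r(x,a)| = R_{\max}$, or equivalently directly from $Q^\mu(x,a) = \mathbb{E}[\sum_{t\ge 0}\gamma^t r_t]$ and the triangle inequality.

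Assembling the pieces: $\|E_K\| \le \|M\|^{K+1}(1-\|M\|)^{-1}\|Q^\mu\| \le \left(\tfrac{\gamma\epsilon}{1-\gamma}\right)^{K+1}\left(1-\tfrac{\gamma\epsilon}{1-\gamma}\right)^{-1}\tfrac{R_{\max}}{1-\gamma}$, where the last inequality uses that the map $t\mapsto t^{K+1}(1-t)^{-1}$ is increasing on $[0,1)$ and $\|M\| \le \tfrac{\gamma\epsilon}{1-\gamma} < 1$. This is exactly the claimed bound.

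I do not anticipate a genuine obstacle here; the only point requiring a little care is fact (ii) — verifying that $\|P^\pi - P^\mu\|$ in the induced infinity-norm equals the maximum total-variation-type quantity and is controlled by $\|\pi-\mu\|_1$, using that rows of $P^\pi$ and $P^\mu$ share the common factor $p(\cdot|x,a)$ so the transition probabilities telescope out. Everything else is routine application of submultiplicativity, the Neumann series bound for $(I-\gamma P^\mu)^{-1}$, and summation of a geometric series.
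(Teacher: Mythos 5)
Your proposal is correct and follows essentially the same route as the paper: the paper likewise bounds each term via $\|U_k\| \leq \left(\frac{\gamma}{1-\gamma}\epsilon\right)^k \frac{R_{\max}}{1-\gamma}$ (using the same operator-norm facts $\|(I-\gamma P^\mu)^{-1}\|\leq (1-\gamma)^{-1}$, $\|P^\pi-P^\mu\|\leq\|\pi-\mu\|_1$, and $\|Q^\mu\|\leq R_{\max}/(1-\gamma)$) and then sums the geometric tail. Your write-up simply makes explicit the row-sum computation for $\|P^\pi-P^\mu\|$ that the paper uses implicitly in its proof of Theorem~\ref{thm:Taylor}.
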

\begin{proof}
The proof follows by bounding each the magnitude of term $||U_k||$,
\begin{align}
    ||E_K|| = \left\| \sum_{k=K+1}^\infty U_k \right\| &\leq \sum_{k=K+1}^\infty||U_k|| \nonumber \\
    &= \sum_{k=K+1}^\infty \| \gamma^k\left((I-\gamma P^\mu)^{-1}(P^\mu - P^\mu)\right)^{k} \| \cdot ||Q^\mu|| \nonumber \\
    &\leq \sum_{k=K+1}^\infty \left(\frac{\gamma}{1-\gamma}\right)^{k} \epsilon^k \frac{R_{\text{max}}}{1-\gamma} \nonumber \\
    &= \left(\frac{\gamma}{1-\gamma} \epsilon\right)^{K+1} \left(1-\frac{\gamma}{1-\gamma}\epsilon\right)^{-1} \frac{R_{\text{max}}}{1-\gamma}\cdot
    \nonumber
\end{align}
The above derivation shows that once we have $\epsilon < \frac{1-\gamma}{\gamma}\CommaBin$ $||E_K|| \rightarrow 0$ as $K \rightarrow \infty$. In the above derivation, we have applied the bound $||U_k|| \leq \left(\frac{\gamma}{1-\gamma}\right)^k \epsilon^k \frac{R_{\text{max}}}{1-\gamma}\CommaBin$ which will also be helpful in later derivations.
\end{proof}

\subsection{Deriving Taylor expansions of RL objective}
\label{appendix:generalizedtrpo-rlobjective}
Recall that the RHS of Eq.\,\ref{eq:taylorexpansion} are the Taylor expansions of Q-functions $Q^\pi$. By construction, $Q^\pi -Q^\mu = \sum_{k\geq 0} U_k$. Though Eq.\,\ref{eq:taylorexpansion} shows the expansion of the entire vector $Q^\pi$, for optimization purposes, we care about the RL objective from a starting state $x_0$, $J(\pi) = \mathbb{E}_{\pi,a_0\sim \pi(\cdot|x_0),x_0}[ Q^\pi(x_0,a_0)] = \pi_0\transpose Q^\pi$, where $\pi_0 \in \mathbb{R}^{|\mathcal{X}||\mathcal{A}|}$ follows the definition from the main paper $\pi_0(x,a) \triangleq \pi(a|x)\delta_{x=x_0}$.

Now we focus on calculating $L_K(\pi,\mu)$ for general $K\geq 0$. For simplicity, we write $L_k(\pi,\mu)$ as $L_k$ and henceforth we might use these notations interchangeably. Now consider the RHS of Eq.\,\ref{eq:pgobjective}. By definition of the $k$-th order Taylor expansion $L_k$ $(1\leq k\leq K)$ of $J(\pi) - J(\mu)$, we maintain terms where $\pi/\mu - 1$ appears at most $K$ times. Equivalently, in matrix form, we remove the higher order terms of $\pi-\mu$ while only maintaining terms such as $(\pi - \mu)^k, k\leq K$. This allows us to conclude that
\begin{align*}
   \sum_{i=1}^K L_k = \left(\pi_0-\mu_0\right)\transpose \left(Q^\mu + \sum_{k\geq 1}^{K-1} U_k\right) + \mu_0\transpose \left(\sum_{k=1}^K U_k\right).
\end{align*}
Furthermore, we can single out each term
\begin{align*}
   L_k &= (\pi_0-\mu_0)\transpose T_{k-1} + \mu_0\transpose U_k,\ \  k\geq 2 \nonumber \\
    L_1 &= (\pi_0-\mu_0)\transpose Q^\mu + \mu_0\transpose U_1.
\end{align*}

\section{Proof of Theorem~\ref{thm:Taylor}}
\label{appendix:taylorexpansionq}

\begin{proof}
We derive the Taylor expansion of Q-function $Q^\pi$ into different orders of $P^\pi - P^\mu$. For that purpose, we recursively make use of the following matrix equality
\begin{align*}
    (I - \gamma P^\pi)^{-1} = (I-\gamma P^\mu)^{-1} + \gamma (I-\gamma P^\mu)^{-1}(P^\pi - P^\mu)(I-\gamma P^\pi)^{-1},
\end{align*}
which can be derived either from matrix inversion equality or directly verified. Since $Q^\pi = (I-\gamma P^\pi)^{-1} R$, we can use the previous equality to get
\begin{align*}
    Q^\pi &= (I-\gamma P^\mu)^{-1} R + \gamma (I-\gamma P^\mu)^{-1} (P^\pi-P^\mu) (I-\gamma P^\pi)^{-1} R \nonumber \\
    &= Q^\mu + \gamma (I-\gamma P^\mu)^{-1} (P^\pi-P^\mu) Q^\pi. 
\end{align*}
Next, we recursively apply the equality $K$ times,
\begin{align*}
    Q^\pi &= Q^\mu + \sum_{k=1}^K \left(\gamma (I-\gamma P^\mu)^{-1} (P^\pi - P^\mu)\right)^k Q^\mu +  \left(\gamma (I-\gamma P^\mu)^{-1} (P^\pi - P^\mu)\right)^{K+1} Q^\pi.
\end{align*}

Now if $||\pi-\mu||_1 <(1-\gamma)/\gamma$ then we can bound the  sup-norm in of the above term as \[\|\gamma (I-\gamma P^\mu)^{-1} (P^\pi - P^\mu)\|_\infty=\frac{\gamma}{1-\gamma}||\pi-\mu||_1 <1,\] thus the $(K+1)$-th order residual term vanishes when $K\rightarrow\infty$. As a result, the limit $K\rightarrow \infty$ is well defined and we deduce
\begin{align*}
    Q^\pi &= \sum_{k=0}^\infty \left(\gamma (I-\gamma P^\mu)^{-1} (P^\pi - P^\mu)\right)^k Q^\mu. 
\end{align*}
\end{proof}

\section{Proof of Theorem \ref{thm:monotonic}}
\label{sec:proofmonotonic}
\begin{proof}
To derive the monotonic improvement theorem for generalized TRPO, it is critical to bound $\sum_{k=K+1}^\infty L_k$. We achieve this by simply bounding each term separately. Recall that from Appendix \ref{appendix:generalizedtrpo-residuals} we have $||U_k|| \leq \left(\frac{\gamma}{1-\gamma}\right)^k \epsilon^k R_\text{max}$. Without loss of generality, we first assume $R_\text{max}=1-\gamma$ for ease of derivations.
\begin{align*}
   |L_k| \leq \epsilon ||U_{k-1}|| + ||U_k|| \leq \epsilon \left(\frac{\gamma}{1-\gamma}\right)^{k-1} \epsilon^{k-1} +  \left(\frac{\gamma}{1-\gamma}\right)^{k} \epsilon^{k} = \left(\frac{\gamma}{1-\gamma}\right)^{k-1} \frac{1}{1-\gamma} \epsilon^k.
\end{align*}
This leads to a bound over the residuals
\begin{align*}
   \left|\sum_{k=K+1}^\infty L_k\right| \leq \sum_{k=K+1}^\infty |L_k| \leq \sum_{k=K+1}^\infty \left(\frac{\gamma}{1-\gamma}\right)^{k-1} \frac{1}{1-\gamma} \epsilon^k = \frac{1}{\gamma}\left(1-\frac{\gamma}{1-\gamma}\epsilon\right)^{-1} \left(\frac{\gamma\epsilon}{1-\gamma}\right)^{K+1}.
\end{align*}
Since we have the equality $J(\pi) = J(\mu) + \sum_{k=1}^\infty L_k$ for $||\pi-\mu||_1 \leq \epsilon < \frac{1-\gamma}{\gamma}\CommaBin$ we can deduce the following monotonic improvement, 
\begin{align}
    J(\pi) \geq J(\mu) + \sum_{k=1}^K L_k - \frac{1}{\gamma}\left(1-\frac{\gamma}{1-\gamma}\epsilon\right)^{-1} \left(\frac{\gamma\epsilon}{1-\gamma}\right)^{K+1}.
\end{align}
To write the above statement in a 
compact way, we define the gap \[G_K = \frac{1}{\gamma}\left(1-\frac{\gamma}{1-\gamma}\epsilon\right)^{-1} \left(\frac{\gamma\epsilon}{1-\gamma}\right)^{K+1}.\] To derive the result for general $R_\text{max}$, note that the gap $G_K$ has a linear dependency on $R_\text{max}$. Hence the general gap is \[G_K \triangleq \frac{1}{\gamma(1-\gamma)}\left(1-\frac{\gamma}{1-\gamma}\epsilon\right)^{-1} \left(\frac{\gamma\epsilon}{1-\gamma}\right)^{K+1}R_\text{max},\] which gives produces the monotonic improvement result (Eq.\,\ref{eq:monotonic}) stated in the main paper.
\end{proof}

\section{Proof of Theorem \ref{thm:variancereduction}}
\label{sec:proofvariancereduction}
\begin{proof}
It is known that for $K=1$, replacing $Q^\mu(x,a)$ by $A^\mu(x,a)$ in the estimation can potentially reduce variance \citep{schulman2015trust,schulman2017proximal} yet keeps the estimate unbiased. Below, we show that in general, replacing $Q^\pi(x,a)$ by $A^\pi(x,a)$ renders the estimate of $L_K(\pi,\mu)$ unbiased for general $K\geq 1$.

As shown above and more clearly in Appendix \ref{appendix:taylorexpansionrl}, $L_K(\pi,\mu)$ can be written as 
\begin{align}
    L_K(\pi,\mu) = \mathbb{E}_{(x^{(i)},a^{(i)})_{1\leq i\leq K}}\left[\prod_{i=1}^K \left(\frac{\pi(a_i|x_i)}{\mu(a_i | x_i)} -1 \right) Q^{\mu}(x_K,a_K)\right].
\end{align}
Note that for clarity, in the above expectation, we omit an explicit sequence of discounted visitation distributions (for detailed derivations of this sequence of visitation distributions, see Appendix \ref{appendix:taylorexpansionrl}). Next,
we leverage the conditional expectation with respect to $(x^{(i)},a^{(i)}),1\leq i\leq K-1$ to yield 
\begin{align}
    L_K(\pi,\mu) &= \mathbb{E}_{(x^{(i)},a^{(i)})_{1\leq i\leq K-1}}\left[\prod_{i=1}^{K-1} \left(\frac{\pi(a_i|x_i)}{\mu(a_i | x_i)} -1 \right) \mathbb{E}_{(x_K,a_K)}\left[\left(\frac{\pi(a_K|x_K)}{\mu(a_K|x_K)}-1\right) Q^{\mu}(x_K,a_K)\right]\right] \nonumber \\
    &= \mathbb{E}_{(x^{(i)},a^{(i)})_{1\leq i\leq K-1}}\left[\prod_{i=1}^{K-1} \left(\frac{\pi(a_i|x_i)}{\mu(a_i | x_i)} -1 \right) \mathbb{E}_{(x_K,a_K)}\left[\left(\frac{\pi(a_K|x_K)}{\mu(a_K|x_K)}-1\right) A^{\mu}(x_K,a_K)\right] \right]\nonumber \\
 &= \mathbb{E}_{(x^{(i)},a^{(i)})_{1\leq i\leq K}}\left[\prod_{i=1}^K \left(\frac{\pi(a_i|x_i)}{\mu(a_i | x_i)} -1 \right) A^{\mu}(x_K,a_K)\right].
\end{align}
The above derivation shows that indeed, replacing $Q^\mu(x,a)$ by $A^\mu(x,a)$ does not change the value the expectation, while potentially reducing the variance of the overall estimation. 
\end{proof}

\section{Proof of Theorem~\ref{thm:off-policy}}
\label{sec:proofreturn}

\begin{proof}
From the definition of the \textit{return off-policy evaluation operator} $\mathcal{R}_1^{\pi,\mu}$, we have 
\begin{align*}
\mathcal{R}_1^{\pi,\mu}Q &= Q+\left(I-\gamma P^{\mu}\right)^{-1}\left(r+\gamma P^{\pi}Q-Q\right) \\
&= \left(I-\gamma P^{\mu}\right)^{-1}\left(r+\gamma P^{\pi}Q-Q + \left(I-\gamma P^\mu\right) Q\right) \\
&= \left(I-\gamma P^{\mu}\right)^{-1} r+\gamma \left(I-\gamma P^{\mu}\right)^{-1}\left( P^{\pi}- P^\mu\right) Q \\
&= Q^\mu+ \gamma \left(I-\gamma P^{\mu}\right)^{-1}\left(P^{\pi}-P^\mu\right) Q.
\end{align*}
Thus $Q\mapsto \mathcal{R}_1^{\pi,\mu}Q$ is a linear operator, and 
\begin{align*}
\left(\mathcal{R}_1^{\pi,\mu}\right)^2 Q &= Q^\mu+ \gamma \left(I-\gamma P^{\mu}\right)^{-1}\left(P^{\pi}-P^\mu\right) \left(\mathcal{R}_1^{\pi,\mu}\right)^2 Q\\
&= Q^\mu+ \gamma \left(I-\gamma P^{\mu}\right)^{-1}\left(P^{\pi}-P^\mu\right) Q^\mu +  \left[\gamma \left(I-\gamma P^{\mu}\right)^{-1}\left(P^{\pi}-P^\mu\right)\right]^2 Q.\\
\end{align*}
Applying this step $K$ times, we deduce
\begin{align*}
\left(\mathcal{R}_1^{\pi,\mu}\right)^K Q &= Q^\mu+ \sum_{k=1}^{K-1} \left[ \gamma \left(I-\gamma P^{\mu}\right)^{-1}\left(P^{\pi}-P^\mu\right)\right]^k Q^\mu+ 
\left[ \gamma \left(I-\gamma P^{\mu}\right)^{-1}\left(P^{\pi}-P^\mu\right) \right]^{K} Q.
\end{align*}
Applying the above operator to $Q^\mu$ we deduce that 
\begin{align*}
\left(\mathcal{R}_1^{\pi,\mu}\right)^K Q^\mu &= Q^\mu+ \sum_{k=1}^{K} \underbrace{\left[ \gamma \left(I-\gamma P^{\mu}\right)^{-1}\left(P^{\pi}-P^\mu\right)\right]^k Q^\mu}_{=U_k},
\end{align*}
which proves our claim.
\end{proof}

\section{Alternative derivation for Taylor expansions of RL objective}
\label{appendix:taylorexpansionrl}
In this section, we provide an alternative derivation of the Taylor expansion of the RL objective. Let $\pi_t / \mu_t = 1 + \epsilon_t$. In cases where $\pi \approx \mu$ (e.g., for the trust-region case), $\epsilon \approx 0$. To calculate $J(\pi)$ using data from $\mu$, a natural technique is employ importance sampling (IS),
\begin{align*}
     J(\pi) &= \mathbb{E}_{\mu,x_0}\left[\left(\prod_{t=0}^\infty \frac{\pi_t}{\mu_t}\right) \sum_{t=0}^\infty r_t \gamma^t\right] 
    = \mathbb{E}_{\mu,x_0}\left[\left(\prod_{t=0}^\infty (1+\epsilon_t)\right) \sum_{t=0}^\infty \gamma^t r_t\right]\cdot 
\end{align*}
To derive Taylor expansion in an \textit{intuitive} way, consider expanding the product $\prod_{t=0}^\infty (1+\epsilon_t)$, assuming that this infinite product is finite. Assume all $\epsilon_t\leq \epsilon$ with some small $\epsilon>0$. A second-order Taylor expansion is
\begin{align}
    \prod_{t=0}^\infty \left(1+\epsilon_t\right) = 1 + \sum_{t=0}^\infty \epsilon_t + \sum_{t=0}^\infty\sum_{t^\prime > t}^\infty \epsilon_t\epsilon_{t^\prime} + O(\epsilon^3). 
\end{align}
Now, consider the term associated with $\sum_{t=0}^\infty \epsilon_t$, 
\begin{align}
    \mathbb{E}_{\mu,x_0}\left[\sum_{t=0}^\infty \epsilon_t \sum_{t=0}^\infty \gamma^t r_t\right] &= \mathbb{E}_{\mu,x_0}\left[\sum_{t=0}^\infty \epsilon_t \sum_{t^\prime=t}^\infty r_t\gamma^{t^\prime} \right] \nonumber \\
&= \mathbb{E}_{\mu,x_0}\left[\sum_{t=0}^\infty \epsilon_t \sum_{t^\prime=t}^\infty r_t\gamma^{t} \gamma^{t^\prime-t} \right] \nonumber \\
&= \mathbb{E}_{\mu,x_0}\left[\sum_{t=0}^\infty \epsilon_t Q^\mu\left(x_t,a_t\right) \gamma^{t} \right] \nonumber \\
&= (1-\gamma) \mathop{\mathbb{E}}_{ \substack{x,a\sim d_\gamma^\mu(\cdot,\cdot|x_0,a_0,0) \\a_0 \sim \mu\left(\cdot|x_0\right)}} \left[\left(\frac{\pi(a|x)}{\mu(a|x)}-1\right) Q^\mu(x,a)\right].
\end{align}
Note that in the last equality, the $\gamma^t$ factor is absorbed into the discounted visitation distribution $d_\gamma^\mu(\cdot,\cdot|x_0,a_0,0)$. It is then clear that this term is exactly the first-order expansion $L_1(\pi,\mu)$ shown in the main paper. 

Similarly, we could derive the second-order expansion by studying the term associated with $\sum_{t=0}^\infty\sum_{t^\prime > t}^\infty \epsilon_t\epsilon_{t^\prime}$. 
\begin{align}
\mathbb{E}_{\mu,x_0}\left[\sum_{t=0}^\infty\sum_{t^\prime > t}^\infty \epsilon_t\epsilon_{t^\prime} \sum_{t=0}^\infty \gamma^t r_t\right] &= \mathbb{E}_{\mu,x_0}\left[\sum_{t=0}^\infty\sum_{t^\prime > t}^\infty \epsilon_t\epsilon_{t^\prime} \sum_{\tau=t^\prime}^\infty r_\tau\gamma^\tau\right] \nonumber \\
&= \mathbb{E}_{\mu,x_0}\left[\sum_{t=0}^\infty\sum_{t^\prime > t}^\infty \epsilon_t\epsilon_{t^\prime} \sum_{\tau=t^\prime}^\infty r_\tau\gamma^{\tau-t^\prime} \gamma^{t}\gamma^{t^\prime-t}\right] \nonumber \\
&= \mathbb{E}_{\mu,x_0}\left[\sum_{t=0}^\infty\sum_{t^\prime > t}^\infty \epsilon_t\epsilon_{t^\prime} Q^\mu(x_{t^\prime},a_{t^\prime}) \gamma^{t}\gamma^{t^\prime-t}\right] \nonumber \\
&= \frac{(1-\gamma)^2}{\gamma} \mathop{\mathbb{E}}_{ \substack{x,a\sim d_\gamma^\mu(\cdot,\cdot|x_0,a_0,0) \\a_0 \sim \mu(\cdot|x_0)\\ x^\prime,a^\prime\sim d_\gamma^\mu(\cdot,\cdot|x,a,1) }} \left[\left(\frac{\pi(a|x)}{\mu(a|x)}-1\Big) \Big(\frac{\pi(a^\prime|x^\prime)}{\mu(a^\prime|x^\prime)}-1\right) Q^\mu(x^\prime,a^\prime)\right].
\end{align}
Note that similar to the first-order expansion, the discount factor $\gamma^{t}\gamma^{t^\prime-t}$ is absorbed into the discounted visitation distribution $d_\gamma^\mu(\cdot,\cdot|x_0,a_0,0)$ and $d_\gamma^\mu(\cdot,\cdot|x,a,1)$ respectively. Here note that the second discounted visitation distribution is $d_\gamma^\mu(\cdot,\cdot|x,a,1)$ instead of $d_\gamma^\mu(\cdot,\cdot|x,a,0)$ --- this is because $t^\prime-t \geq 1$ by construction and we need to sample the second state \textit{conditional} on the time difference to be $\geq 1$. The above is exactly the second-order expansion $L_2(\pi,\mu)$.

By a similar argument, we can derive expansion for all higher-order expansion by considering the term associated with $\sum_{t_1=0}^\infty \sum_{t_2 > t_1}^\infty ... \sum_{t_K > t_{K-1}}^\infty \epsilon_1\epsilon_2\dots\epsilon_K$. This would introduce $K$ discounted visitation distributions $d_\gamma^\mu(\cdot,\cdot|x_0,a_0,0)$ and $d_\gamma^\mu(\cdot,\cdot|x_k,a_k,1),1\leq k\leq K$. 

The above derivation also illustrates how these higher-order terms can be estimated in practice. For the $k$-th order, given a trajectory under $\mu$, sequentially sample $K$ time difference $\Delta t_k$ along the trajectory, where $t_1\sim \text{Geometric}(1-\gamma)$. For $k\geq 2$, $t_k \sim \text{Geometric}(1-\gamma)$ while conditional on $\Delta t_k \geq 1$. Then define the time $t_k = \sum_{i \leq k}\Delta t_{i}$. Let $x_i = x_{t_i}$ and $a_i = a_{t_i}$, Then, a one sample  estimate is 
\begin{align}
    \prod_{i=1}^K \left(\frac{\pi(a_i|x_i)}{\mu(a_i | x_i)} -1 \right) Q^{\mu}(x_K,a_K).
\end{align}

\subsection{Connection between off-policy evaluation and generalized advantage estimation (GAE)}
\label{appendix:gae}
\textit{Generalized advantage estimation} (GAE, \citealt{schulman2015high}) is a technique for advantage estimation. According to \citet{schulman2015high,schulman2017proximal}, GAE trades-off bias and variance in the advantage estimation and can boost the performance of downstream policy optimization. On the other hand, off-policy evaluation operators \citep{harutyunyan_QLambda_2016,munos2016safe} are dedicated to evaluations of Q-function $Q^\pi(x,a)$. What are the connections between these approaches?

The actor-critic algorithm that uses GAE maintains a policy $\pi(a|x)$ and value function $V_\phi(x)$ with parameter $\phi$. Data are collected on-policy, i.e., $\mu=\pi$. Let $\hat{A}_{\text{GAE}}(x,a)$ be the GAE estimation for $(x,a)$. Naturally, GAE can be interpreted as first carrying out a Q-function estimation $\hat{Q}(x,a)$ and then subtracting the baseline
\begin{align}
    \hat{A}_\text{GAE}(x,a) \triangleq \hat{Q}(x,a) - V_\phi(x).
\end{align}
Now we show that the Q-function estimation $\hat{Q}(x,a)$ can be interpreted as applying the $Q(\lambda)$ operator to an initial Q-function estimate. Here importantly, to make the connection exact, we assume the initial Q-function estimate to be bootstrapped from the value function $Q_\text{init}(x,a) \triangleq V_\phi(x)$. To sum up,
\begin{align}
    \hat{A}_\text{GAE}(x,a) = \left[ \mathcal{R}_{c=\lambda}^{\pi,\pi} Q_\text{init} \right](x,a)  - V_\phi(x),
\end{align}
where $\mathcal{R}_{c=\lambda}^{\pi,\mu}$ refers to the evaluation operator with trace coefficients $c(x,a) = \lambda$. Finally, the evaluation operator is replaced by sample estimates in practice. 
From the above, we see that there is a link between advantage estimation (i.e., GAE) with policy evaluation (i.e., the $Q(\lambda)$ operator).

\section{Second-order expansions for value-based algorithms}
\label{appendix:secondorderr2d2}
In this section, we provide algorithmic details on value-based algorithms in our experiments. The application of Taylor expansions allow us to derive the expansion for RL objective, which is useful in policy-optimization where algorithms maintain a parameterized policy $\pi_\theta$. Taking one step back, Taylor expansion can be used for policy evaluation as well, and can be useful in algorithms where Q-functions (value functions) are parameterized $Q_\theta$ where the policy is implicitly defined (e.g., $\epsilon$-greedy). In our experiments, we take R2D2 \citep{kapturowski2018recurrent} as the baseline algorithm. Below, we briefly introduce the algorithmic procedure of R2D2 and present the Taylor expansion variants.

\paragraph{Basic components.} The baseline R2D2 maintains a Q-function $Q_\theta(x,a)$ parameterized by a neural network $\theta$. The central learner maintains an updated parameter $\theta$ and distributed actors maintain slightly delayed copies $\theta_{\text{old}}$. Distributed actors collect data using behavior policy $\mu$, defined as $\epsilon$-greedy with respect to $Q_{\theta_{\text{old}}}(x,a)$. The target policy $\pi$ is defined as greedy with respect to $Q_\theta(x,a)$. Actors send data to a replay buffer, and the learner samples partial trajectories $(x_t,a_t,r_t)_{t=1}^T$ from the buffer and computes updates to the parameter $\theta$. In particular, the learner calculates regression targets $Q_{\text{target}}(x_t,a_t)$ and the 
Q-function is updated via $\theta\leftarrow \theta - \alpha \cdot \nabla_\theta (Q_\theta(x_t,a_t)-Q_{\text{target}}(x_t,a_t))^2$ with learning rate $\alpha > 0$.

\paragraph{Algorithmic variants.}  Algorithmic variants of R2D2 differ in how they compute the targets  $Q_{\text{target}}(x,a)$. A useful unified view provided by \citet{rowland2019adaptive} is that $Q_{\text{target}}(x,a)$ aims to approximate $Q^\pi(x,a)$ such that $Q_\theta(x,a) \rightarrow Q^\pi(x,a)$ during the update. 

Along sampled trajectories, we recursively calculate the targets $Q_{\text{target}}(x_t,a_t),1\leq t\leq T$ based on recipes of different variants. Below are a few alternatives we  evaluated in our experimenst, where we e.g., use $\hat{Q}_{\text{zero}}(x_t,a_t)$ to represent $Q_{\text{target}}(x_t,a_t)$ for the zero-order baseline.
\begin{itemize}
    \item \textbf{Zero-order:} $\hat{Q}_{\text{zero}}(x_t,a_t) \triangleq r_t + \gamma \hat{Q}_{\text{zero}}(x_{t+1},a_{t+1})$
    \item \textbf{First-order:} $\hat{Q}_{\text{first}}(x_t,a_t) \triangleq r_t + \gamma\left(\mathbb{E}_{\pi}\left[Q_\theta(x_{t+1},\cdot)\right] - Q_\theta(x_{t+1},a_{t+1})\right) + \gamma \hat{Q}_{\text{first}}(x_{t+1},a_{t+1})$
    \item \textbf{Second-order:} $\hat{Q}_{\text{second}}(x_t,a_t) \triangleq r_t + \gamma\left(\mathbb{E}_{\pi}\left[\hat{Q}_\text{first}^\prime(x_{t+1},\cdot)\right] - \hat{Q}_\text{first}(x_{t+1},a_{t+1})\right) + \gamma \hat{Q}_{\text{second}}(x_{t+1},a_{t+1})$
    \item \textbf{Retrace:} $\hat{Q}_{\text{retrace}}(x_t,a_t) \triangleq r_t + \gamma c_{t+1}\left(\mathbb{E}_{\pi}\left[Q_\text{retrace}(x_{t+1},\cdot)\right] - Q_\text{retrace}(x_{t+1},a_{t+1})]\right) + \gamma c_{t+1}\hat{Q}_{\text{retrace}}(x_{t+1},a_{t+1})$.
\end{itemize}
For retrace, we set the trace coefficient $c_t \triangleq \lambda \cdot \min\left\{\frac{\pi(a_t|x_t)}{\mu(a_t|x_t)},1\right\}$ following \citet{munos2016safe}. All baselines bootstrap $\hat{Q}_{\text{target}}(x_T,a_T) = Q_\theta(x_t,a_T)$ from the Q-function network for the last state-action pair. 

As shown above, the zero-order baseline reduces to discounted sum of returns (plus a bootstrap value at the end of the trajectory). The first-order adopts the $Q(\lambda),\lambda=1$ recursive update rule. The second-order corresponds to applying $Q(\lambda),\lambda=1$ twice to the partial trajectory---in particular, this corresponds to replacing the Q-function baseline $Q_\theta(x,a)$ by first-order approximations $Q_{\text{first}}(x,a)$. For the above, we define $\hat{Q}_{\text{first}}^\prime(x_t,a) \triangleq \mathbb{I}_{a=a_t} \hat{Q}_{\text{first}}(x_t,a) + (1 - \mathbb{I}_{a=a_t}) Q_\theta(x_t,a)$ where $\mathbb{I}$ is the indicator function. This ensures that the expectations are well defined in the recursive updates.

As discussed in the main paper, it is not always necessarily optimal to carry out exact first/second-order correction, it might be potentially beneficial to strike a balance in between for bias-variance trade-off. To this end, we define the ultimate second-order target as $\hat{Q}_{\text{target-final}} = \hat{Q}_{\text{first}} + \eta (\hat{Q}_{\text{second}} - \hat{Q}_{\text{first}})$ for $\eta \geq 0$. 

See Figure \ref{fig:r2d2} and Figure \ref{fig:r2d2all} for the comparison results of these algorithmic variants. Further hyper-parameter details are specified in Appendix \ref{appendix:experiment-valuelearning}.

\section{Additional experimental details and results}
\label{appendix:experiment}
\subsection{Random MDP}
\label{appendix:experiment-randommdp}

The random MDP is identified by the number of states $|\mathcal{X}|$ and actions $|\mathcal{A}|$. The transitions $p(x^\prime|x,a)$ are generated as samples from a Dirichlet distribution. The reward function $r(x,a)$ is generated as a Dirac, sampled uniformly at random from $[-1,1]$. The discount factor is set to  $\gamma\triangleq 0.9$. The results in Figure~\ref{fig:smallmdp} are averaged over 10 MDPs.

We randomly fix a target policy $\pi$ and randomly sample another behavior policy $\mu$ in the vicinity of $\pi$ such that $||\pi-\mu||_1 \leq \epsilon,$ for some fixed $\epsilon>0$. Effectively, $\epsilon$ controls the off-policiness measured as the difference between $\pi$ and $,\mu$. When using the reward estimate $\hat{R}$ to compute the Q-function estimate, trajectories are generated under the behavior policy $\mu$. The reward estimate is initialized to be zeros $\hat{R}(x,a) \triangleq 0$ for all $x$ and $a$. Since the rewards are deterministic, we have have that when $(x,a)$ is encountered then $\hat{R}(x,a) \leftarrow R(x,a)$.

\subsection{Evaluation of distributed experiments}
\label{appendix:experiment-evaluation}

For this part, the evaluation environments is the entire suite of Atari games \citep{bellemare2013arcade} consisting of $57$ levels. Since each level has very different reward scale and difficulty, we report human-normalized scores for each level, calculated as $z_i = (r_i - o_i) / (h_i - o_i)$, where $h_i$ and $o_i$ are the performances of human and a random policy on level $i$ respectively. 

For all experiments, we report summarizing statistics of the human-normalized scores across all levels. For example, at any point in training, the mean human-normalized score is the mean statistic across $z_i,1\leq i\leq 57$.

\begin{figure}[t]
\includegraphics[width=6cm]{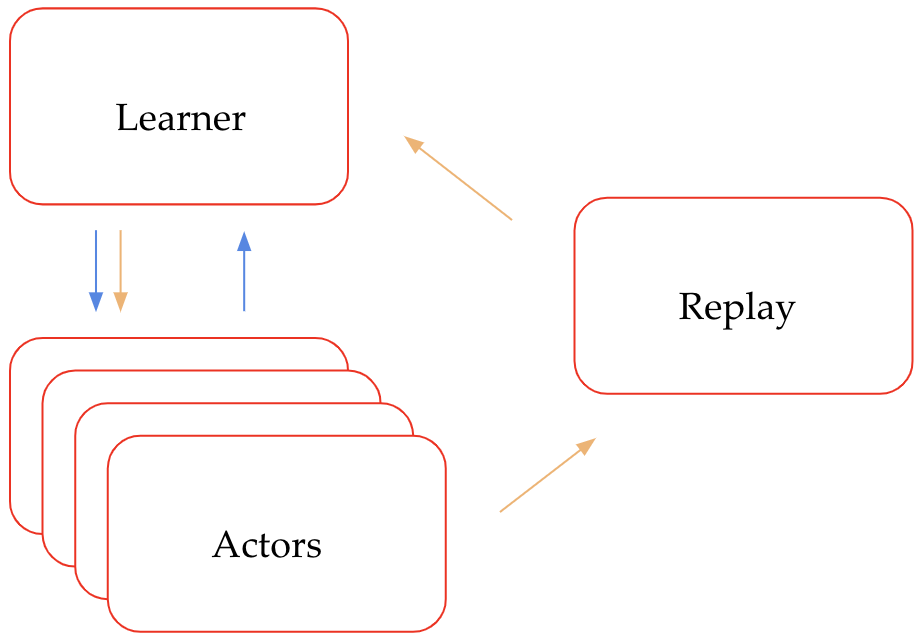}
\centering
\caption{Architecture of distributed agents. Agents differ by the topology, i.e., how actors/learner/replay pass data/parameters between them. The above architecture summarizes common setups such as IMPALA  \citep{espeholt2018impala} as blue arrows and R2D2 \citep{kapturowski2018recurrent} as orange arrows.}
\label{fig:architecture}
\end{figure}

\subsection{Details on distributed algorithms} 
\label{appendix:experiment-architecture}

Distributed algorithms have led to significant performance gains on challenging domains \citep{nair2015massively,mnih2016asynchronous,babaeizadeh2016reinforcement,barth2018distributed,horgan2018distributed}. Here, our focus is on recent state-of-the-art algorithms. In general, distributed agents consist of one central learner, multiple actors and optionally a replay buffer. The central learner maintains a parameter copy $\theta$ and updates parameters based on sampled data. Multiple actors each maintaining a slighted delayed parameter copy $\theta_{\text{old}}$ and interact with the environment to generate partial trajectories. Actors synchronize parameters from the learner periodically. 

Algorithms differ by how are data and parameters passed between each component. We focus on two types of state-of-the-art scalable topologies: \textbf{Type I} adopts  IMPALA-typed architecture (\citealp{espeholt2018impala}; see blue arrows in Figure \ref{fig:architecture} in Appendix \ref{appendix:experiment}), data are directly passed from actors to the learner. See Section \ref{onpolicy} and Section \ref{ss:gendist}; \textbf{Type II}. adopts R2D2-typed architecture (\citealt{kapturowski2018recurrent}, see orange arrows in Figure \ref{fig:architecture} in Appendix~\ref{appendix:experiment}), data are sent from actors to a replay, and later sampled according to priorities to the learner \citep{horgan2018distributed}. 

\subsection{Details on TayPO-$2$ for policy optimization}
\label{appendix:experiment-taypo2}

\paragraph{Discussion on the first-order objective.} By construction, the first-order objective (Eq.\,\ref{eq:firstorder}) samples states with a discounted visitation distribution. Though such an objective is conducive to theoretical analysis, it is too conservative in practice. Indeed, the practical objective is usually undiscounted $\approx \mathbb{E}_{x_0,a_0}[\sum_{t=0}^{T_e} r_t]$ where $T_e$ is an artificial threshold of the episode length. Therefore, in practice, the state $x$ is sampled `uniformly' from generated trajectories, i.e., without the discount factor~$\gamma^t$.

\paragraph{Discussion on the TayPO-$2$ objective.} For the second-order objective (Eq.\,\ref{eq:secondorder}), recall that we sample two state-action pairs $(x,a),(x^\prime,a^\prime)$. In practice, we sample $(x,a)$ uniformly (without discount) as the first-order objective and sample $(x^\prime,a^\prime)$ with discount factors $\gamma^{\Delta t}$ where $\Delta t$ is the time difference between $x^\prime$ and $x$. This is to ensure that we have a comparable loss function $\hat{L}_2(\pi,\mu)$ compared to the first-order $\hat{L}_1(\pi,\mu)$.

\paragraph{Further practical considerations.} In practice, loss functions are computed on partial trajectories $(x_t,a_t)_{t=1}^T$ with length~$T$. Though, theoretically, evaluating $\hat{L}_2(\pi_\theta,\mu)$ requires generating time steps from a geometric distribution $\text{Geometric}(1-\gamma)$ which can exceed the length $T$, in practice, we apply the truncation at $T$. In addition, we evaluate $\hat{L}_2(\pi_\theta,\mu)$ by enumerating over the entire (truncated) trajectory instead of sampling time steps. This comes at several trade-offs: enumerating the trajectory require ${\cal O}(T^2)$ computations while sampling can reduce this complexity to ${\cal O}(T)$; enumerating over all steps could reduce the variance by pooling all data of the trajectory, but could also increase the variance due to correlations of state-action pairs on a single trajectory. In practice, we find enumerating all steps along the trajectory works well.

\subsection{Near on-policy policy optimization}
\label{appendix:experiment-nearonpolicy}

\paragraph{Additional results.} The additional results on the Atari suite are in Figure~\ref{fig:onpolicy_median}, where, we show the median human normalized scores during training. We notice that the second-order still steadily outperforms other baselines.
\begin{figure}[t]
\includegraphics[width=8cm]{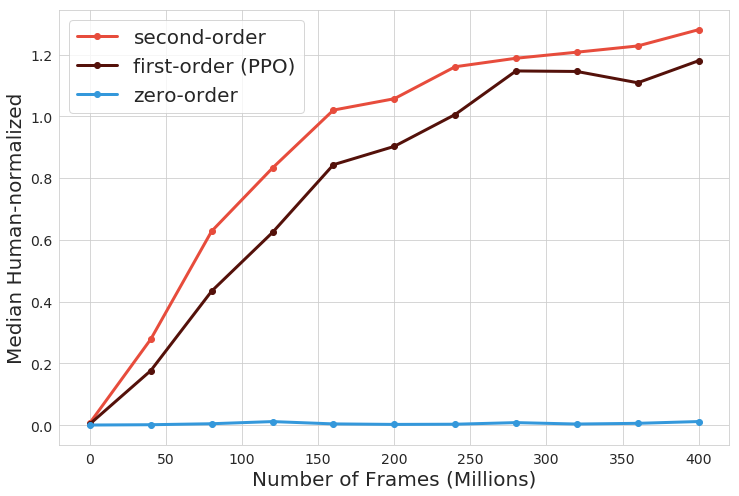}
\centering
\caption{Near on-policy optimization. The x-axis is the number of frames (millions) and y-axis shows the median human-normalized scores averaged across 57 Atari levels. The plot shows the mean curve averaged across 3 random seeds. We observe that second-order expansions allow for faster learning and better asymptotic performance given the fixed budget on actor steps.}
\label{fig:onpolicy_median}
\end{figure}

\paragraph{Discussion on proximal policy optimization (PPO) implementation.} By design, PPO \citep{schulman2017proximal} alternates between data collection and policy update. The data are always collected under $\mu = \pi$ and the new policy gets updated via several gradient steps on the same batch of data. In practice, such a `fully-synchronized` implementation is not efficient because it does not leverage a distributed computational architecture. To improve the implementation, we modify the original algorithm and adapt it to an asynchronous setting. To this end, several changes must be made to the algorithm.

\begin{itemize}
    \item The data are collected with actor policy $\mu$ instead of the previous policy.
    \item The number of gradient descent per batch is one instead of multiple, to balance the data throughput from the actor.
\end{itemize}

\paragraph{Details on computational architecture.} For the near on-policy optimization experiments, we set up an agent with an algorithmic architecture similar to that of IMPALA \citep{espeholt2018impala}. In order to minimize the delays between actors and the central learner, we schedule all components of the algorithms on a single host machine. The learner uses a single TPU for fast inference and computation, while the actors use CPUs for fast batched environment rollouts.

We apply a small network similar to \citet{mnih2016asynchronous}, please see Appendix \ref{appendix:experiment-offpolicypg} for detailed descriptions of the architecture.

Following the conventional practice of training on Atari games \citep{mnih2016asynchronous}, we clip the reward between $[-1,1]$. The learner applies a discount $\gamma=0.99$ to calculate value function targets. The total loss function is a linear combination of policy loss $L_\text{policy}$, value function loss $L_{\text{value}}$ and entropy regularization $L_{\text{entropy}}$, i.e., $L\triangleq L_\text{policy} + c_v L_\text{value} + c_e L_{\text{entropy}}$ where $c_v\triangleq0.5$ and $c_e\triangleq0.01$. All missing details are the same as the hyper-parameter setup of the IMPALA architecture to be introduced below.

The networks are optimized with a RMSProp optimizer \citep{tieleman2012lecture} with the learning rate $\alpha \triangleq 10^{-3}$.

\subsection{Distributed off-policy policy optimization}
\label{appendix:experiment-offpolicypg}

\paragraph{V-trace implementations.} V-trace is a strong baseline for correcting off-policy data \citep{espeholt2018impala}. Given a partial trajectory $(x_t,y_t,r_t)_{t=1}^T$, let $\rho_t \triangleq \min\{\bar{\rho},\pi(a_t|x_t) / \mu(a_t|x_t)\}$ be the truncated IS ratio. Let $V_\phi(x)$ be a value function baseline. Define $\delta_t V \triangleq \rho_t (r_t + \gamma V(x_{t+1}) - V(x_t))$ be a temporal difference. V-trace targets are calculated recursively as
\begin{align}
    v(x_t) \triangleq V(x_t ) + \delta_t V + \gamma c_t\left(v(x_{t+1}) - V(x_{t+1})\right),
\end{align}
where $c_t \triangleq \min\{\bar{c}, \pi(a_t|x_t) / \mu(a_t,x_t)\}$ is the trace coefficient. The value function baseline is then trained to approximate these targets $V_\phi(x) \approx v(x)$.

The policy gradient is corrected by clipped IS ratio as well. The policy parameter $\theta$ is updated using the gradient
\begin{align}
    \min\{\bar{\rho}, \pi(a_t|x_t) / \mu(a_t | x_t)\} \nabla_\theta \log \pi(a_t|x_t) \hat{a}_t,
\end{align}
where the advantage estimates are $\hat{a}_t \triangleq  r_t + \gamma v(x_{t+1}) - v(x_t)$ and the derivative $\nabla_\theta$ is taken with respect to the learner parameter $\pi(a|x) = \pi_\theta(a|x)$. Following the original setup \citep{espeholt2018impala}, we set $\bar{\rho} \triangleq \bar{c} \triangleq 1$.

\paragraph{Hyper-parameters for Taylor expansions.} The Taylor expansion variants (including first-order and second-order expansions) all adopt the surrogate loss functions introduced in the main text. The second-order expansion requires a hyper-parameter $\eta$ which we set to $\eta=1$.

The value function targets are estimated as uncorrected cumulative returns, computed recursively $v(x_t) = r_t + \gamma v(x_{t+1})$ and then the value function baseline is trained to $V_\phi(x) \approx v(x)$. Though adopting more complex estimation techniques such as GAE \citep{schulman2015high} could potentially improve the accuracy of the bootstrapped values.

\paragraph{Additional results.} Additional detailed results on Atari games are in Table~\ref{table:atarinodelay} and Table~\ref{table:ataridelayed}. In both tables, we show the performance of different algorithmic variants (first-order, second-order, V-trace) across all Atari games after training for $400M$ frames. In Table \ref{table:atarinodelay}, there is no artificial delay between actors and the learner, though there is still delay due to the computational setup across multiple machines. In Table \ref{table:ataridelayed}, there is an artificial delay between actors and the learner.

\paragraph{Details on the distributed architecture.} The general policy-based distributed agent follows the architecture design of IMPALA \citep{espeholt2018impala}, i.e., a central GPU learner and $N\triangleq512$ distributed CPU actors. The actors keep generating data by executing their local copies of the policy $\mu$, and sending data to the queue maintained by the learner. The parameters are periodically synchronized between the actors and the learner.

The architecture details are the same the ones of \citet{espeholt2018impala}. For completeness, we give some important details below; please refer to the original paper for the full description. For the delay experiments (Figure~\ref{fig:impala}), we  used two different model architectures: a shallow model based on work of \citet{mnih2016asynchronous} with an LSTM between the torso embedding and the output of policy/value function. The deep model refers to a deep network model with residual network \citep{he2016deep}. See Figure~3 of \citep{espeholt2018impala} for details, in particular the layer size and activation's functions.

The policy/value function networks are both trained with RMSProp optimizers \citep{tieleman2012lecture} with learning rate $\alpha \triangleq 5\cdot10^{-4}$ and no momentum. To encourage exploration, the policy loss is augmented by an entropy regularization term with coefficient $c_e \triangleq  0.01$ and a  baseline loss with coefficient $c_v\triangleq0.5$, i.e., the full loss is $L \triangleq L_{\text{policy}} + c_vL_\text{value} + c_e L_{\text{entropy}}$. These single hyper-parameters are selected according to Appendix~D of \citet{espeholt2018impala}.

Actors send partial trajectories of length $T\triangleq20$ to the learner.
For robustness of the training, rewards $r_t$ are clipped between $[-1,1]$. We adopt frame stacking and sticky actions as \citet{mnih2013playing}. The discount factor is $\gamma\triangleq0.99$ for calculating the baseline estimations.

\begin{table}[]
    \vskip 0.1in
    \begin{center}
    \footnotesize
    \begin{sc}
    \begin{tabular}{c|c|c|c|c|c}\toprule[1.5pt]
        \bf Levels & \bf Random & \bf Human & \bf V-trace & \bf First-order & \bf Second-order (TayPO-$2$) \\\midrule
alien & 227.75 & 7127.8 & 11358 & 5004 & \textbf{9634} \\ \hline
amidar & 5.77 & \textbf{1719.53} & 1442 & 1368 & 1350 \\ \hline
assault & 222.39 & 742 & 13759 & 9930 & \textbf{11505} \\ \hline
asterix & 210 & 8503.33 & 135730 & 152980 & \textbf{170490} \\ \hline
asteroids & 719.1 & \textbf{47388.67} & 29545 & 35385 & 44015 \\ \hline
atlantis & 12850 & 29028.13 & 711170 & \textbf{724230} & 700410 \\ \hline
bank\_heist & 14.2 & 753.13 & 1188 & 1166 & \textbf{1218} \\ \hline
battle\_zone & 2360 & \textbf{37187.5} & 13370 & 13828 & 13755 \\ \hline
beam\_rider & 363.88 & 16926.53 & \textbf{24031} & 18798 & 23735 \\ \hline
berzerk & 123.65 & \textbf{2630.42} & 1292 & 1383 & 1347 \\ \hline
bowling & 23.11 & \textbf{160.73} & 50 & 50 & 53 \\ \hline
boxing & 0.05 & 12.06 & 99 & 99 & 99 \\ \hline
breakout & 1.72 & 30.47 & 551 & 580 & \textbf{637} \\ \hline
centipede & 2090.87 & \textbf{12017.04} & 10166 & 8773 & 7747 \\ \hline
chopper\_command & 811 & 7387.8 & \textbf{19256} & 17129 & 17776 \\ \hline
crazy\_climber & 10780.5 & 35829.41 & \textbf{139190} & 132670 & 134310 \\ \hline
defender & 2874.5 & 18688.89 & 73020 & 72658 & 133090 \\ \hline
demon\_attack & 152.07 & 1971 & 119130 & 117860 & \textbf{133030} \\ \hline
double\_dunk & -18.55 & -16.4 & -7.6 & \textbf{-7.4} & -8.5 \\ \hline
enduro & 0 & \textbf{860.53} & 0 & 0 & 0 \\ \hline
fishing\_derby & -91.71 & -38.8 & \textbf{33} & 32 & 31.4 \\ \hline
freeway & 0.01 & \textbf{29.6} & 0 & 0 & 0 \\ \hline
frostbite & 65.2 & \textbf{4334.67} & 302 & 298 & 302 \\ \hline
gopher & 257.6 & 2412.5 & 23232 & 20805 & \textbf{26123} \\ \hline
gravitar & 173 & \textbf{3351.43} & 373 & 386 & 430 \\ \hline
hero & 1026.97 & 30826.38 & 32757 & 33277 & \textbf{36639} \\ \hline
ice\_hockey & -11.15 & 0.88 & 0.7 & 1.6 & \textbf{4.3} \\ \hline
jamesbond & 29 & 302.8 & \textbf{759} & 548 & 693 \\ \hline
kangaroo & 52 & \textbf{3035} & 1147 & 1339 & 1181 \\ \hline
krull & 1598.05 & 2665.53 & 9545 & 8408 & \textbf{9971} \\ \hline
kung\_fu\_master & 258.5 & 22736.25 & \textbf{44920} & 33004 & 41516 \\ \hline
montezuma\_revenge & 0 & \textbf{4753.33} & 0 & 0 & 0 \\ \hline
ms\_pacman & 307.3 & 6951.6 & 4018 & 4982 & \textbf{9702} \\ \hline
name\_this\_game & 2292.35 & 8049 & \textbf{18084} & 12345 & 13316 \\ \hline
phoenix & 761.4 & 7242.6 & \textbf{148840} & 91040 & 94131 \\ \hline
pitfall & -229.44 & \textbf{6463.69} & -5.9 & -4.2 & -4.5 \\ \hline
pong & -20.71 & 14.59 & 21 & 21 & 21 \\ \hline
private\_eye & 24.94 & \textbf{69571.27} & 100 & 94 & 99 \\ \hline
qbert & 163.88 & 13455 & 16044 & 20862 & \textbf{20891} \\ \hline
riverraid & 1338.5 & 17118 & \textbf{24116} & 22151 & 21253 \\ \hline
road\_runner & 11.5 & 7845 & 39513 & \textbf{43974} & 38177 \\ \hline
robotank & 2.16 & \textbf{11.94} & 7.2 & 7.1 & 7 \\ \hline
seaquest & 68.4 & \textbf{42054.71} & 1731 & 1735 & 1743 \\ \hline
skiing & -17098.09 & \textbf{-4336.93} & -10865 & -13303 & -10386 \\ \hline
solaris & 1236.3 & \textbf{12326.67} & 2375 & 2263 & 2486 \\ \hline
space\_invaders & 148.03 & 1668.67 & 13503 & \textbf{13544} & 13171 \\ \hline
star\_gunner & 664 & 10250 & \textbf{265480} & 190920 & 214580 \\ \hline
surround & -9.99 & \textbf{6.53} & 4.3 & 3.4 & 2.4 \\ \hline
tennis & -23.84 & -8.27 & 20.6 & \textbf{22} & 21.8 \\ \hline
time\_pilot & 3568 & 5229.1 & 28871 & \textbf{32813} & 32447 \\ \hline
tutankham & 11.43 & 167.59 & 243 & \textbf{278} & 277 \\ \hline
up\_n\_down & 533.4 & 11693.23 & \textbf{193520} & 163130 & 188190 \\ \hline
venture & 0 & \textbf{1187.5} & 0 & 0 & 0 \\ \hline
video\_pinball & 0 & 17667.9 & \textbf{359610} & 326060 & 315930 \\ \hline
wizard\_of\_wor & 563.5 & 4756.52 & 7302 & 5114 & \textbf{7646} \\ \hline
yars\_revenge & 3092.91 & 54576.93 & 81584 & 90581 & \textbf{93680} \\ \hline
zaxxon & 32.5 & 9173.3 & 21635 & 21149 & \textbf{25603} \\ 
         \bottomrule[1.46pt]
    \end{tabular} \par
    \end{sc}
    \end{center}
    \caption{Scores across 57 Atari levels for experiments on general policy-optimization with distributed architecture with \textbf{no artificial delays} between actors and learner. We compare several alternatives for off-policy correction: V-trace, first-order and second-order. We also provide scores for random policy and human players as reference. All scores are obtained by training for 400M frames. Best results per game are highlighted in bold font.}
    \label{table:atarinodelay}
\end{table}

\begin{table}[]
    \vskip 0.1in
    \begin{center}
    \footnotesize
    \begin{sc}
    \begin{tabular}{c|c|c|c|c|c}\toprule[1.5pt]
        \bf Levels & \bf Random & \bf Human & \bf V-trace & \bf First-order & \bf Second-order (TayPO-$2$) \\\midrule
alien & 227.75 & 7127.8 & 464 & 1820 & \textbf{3257} \\ \hline
amidar & 5.77 & \textbf{1719.53} & 81 & 428 & 541 \\ \hline
assault & 222.39 & 742 & 1764 & 4868 & \textbf{6490} \\ \hline
asterix & 210 & 8503.33 & 2151 & \textbf{165170} & 161800 \\ \hline
asteroids & 719.1 & \textbf{47388.67} & 2256 & 1329 & 3886 \\ \hline
atlantis & 12850 & 29028.13 & 311111 & 543210 & \textbf{621920} \\ \hline
bank\_heist & 14.2 & \textbf{753.13} & 71 & 483 & 524 \\ \hline
battle\_zone & 2360 & \textbf{37187.5} & 9021 & 10481 & 13820 \\ \hline
beam\_rider & 363.88 & 16926.53 & 7391 & 16769 & \textbf{19030} \\ \hline
berzerk & 123.65 & \textbf{2630.42} & 631 & 757 & 826 \\ \hline
bowling & 23.11 & \textbf{160.73} & 40 & 36 & 50 \\ \hline
boxing & 0.05 & 12.06 & 51 & 93 & \textbf{95} \\ \hline
breakout & 1.72 & 30.47 & 71 & 298 & \textbf{387} \\ \hline
centipede & 2090.87 & \textbf{12017.04} & 8847 & 6545 & 6924 \\ \hline
chopper\_command & 811 & 7387.8 & 2340 & 4837 & \textbf{8064} \\ \hline
crazy\_climber & 10780.5 & \textbf{35829.41} & 23745 & 63982 & 117830 \\ \hline
defender & 2874.5 & 18688.89 & 20594 & 18088 & \textbf{34684} \\ \hline
demon\_attack & 152.07 & 1971 & 36491 & 40324 & \textbf{63758} \\ \hline
double\_dunk & -18.55 & -16.4 & -11.7 & -9.9 & \textbf{-7.2} \\ \hline
enduro & 0 & \textbf{860.53} & 0 & 0 & 0 \\ \hline
fishing\_derby & -91.71 & -38.8 & -6.6 & 15.4 & \textbf{15.7} \\ \hline
freeway & 0.01 & \textbf{29.6} & 0 & 0 & 0.01 \\ \hline
frostbite & 65.2 & \textbf{4334.67} & 230 & 257 & 267 \\ \hline
gopher & 257.6 & 2412.5 & 1551 & 2213 & \textbf{5376} \\ \hline
gravitar & 173 & \textbf{3351.43} & 263 & 300 & 351 \\ \hline
hero & 1026.97 & \textbf{30826.38} & 2012 & 3452 & 12027 \\ \hline
ice\_hockey & -11.15 & 0.88 & -1.5 & -0.9 & \textbf{1.01} \\ \hline
jamesbond & 29 & 302.8 & 307 & \textbf{406} & 389 \\ \hline
kangaroo & 52 & \textbf{3035} & 416 & 342 & 805 \\ \hline
krull & 1598.05 & 2665.53 & 5737 & 5416 & \textbf{9101} \\ \hline
kung\_fu\_master & 258.5 & 22736.25 & 12991 & 12968 & \textbf{23741} \\ \hline
montezuma\_revenge & 0 & \textbf{4753.33} & 0 & 0 & 0 \\ \hline
ms\_pacman & 307.3 & \textbf{6951.6} & 960 & 2542 & 2763 \\ \hline
name\_this\_game & 2292.35 & 8049 & 13315 & 15510 & 15510 \\ \hline
phoenix & 761.4 & 7242.6 & 6538 & 16566 & \textbf{32146} \\ \hline
pitfall & -229.44 & \textbf{6463.69} & -4.5 & -4.5 & -3.2 \\ \hline
pong & -20.71 & 14.59 & -14 & 13 & \textbf{18.1} \\ \hline
private\_eye & 24.94 & \textbf{69571.27} & 88 & 80 & 185 \\ \hline
qbert & 163.88 & \textbf{13455} & 1155 & 8856 & 10578 \\ \hline
riverraid & 1338.5 & \textbf{17118} & 4607 & 2632 & 5064 \\ \hline
road\_runner & 11.5 & 7845 & 6404 & 16792 & \textbf{36857} \\ \hline
robotank & 2.16 & \textbf{11.94} & 6.2 & 5.5 & 8.07 \\ \hline
seaquest & 68.4 & \textbf{42054.71} & 1884 & 1881 & 2283 \\ \hline
skiing & -17098.09 & \textbf{-4336.93} & -27463 & -11778 & -22189 \\ \hline
solaris & 1236.3 & \textbf{12326.67} & 2435 & 2269 & 2320 \\ \hline
space\_invaders & 148.03 & 1668.67 & 1029 & 2955 & \textbf{4399} \\ \hline
star\_gunner & 664 & 10250 & 25622 & 27001 & \textbf{51257} \\ \hline
surround & -9.99 & \textbf{6.53} & -8.4 & -2.5 & -0.74 \\ \hline
tennis & -23.84 & -8.27 & -20 & -8.84 & \textbf{4.89} \\ \hline
time\_pilot & 3568 & 5229.1 & 8963 & \textbf{18295} & 17884 \\ \hline
tutankham & 11.43 & 167.59 & 97 & 161 & \textbf{172} \\ \hline
up\_n\_down & 533.4 & 11693.23 & 18726 & 18693 & \textbf{49468} \\ \hline
venture & 0 & \textbf{1187.5} & 0 & 0 & 0 \\ \hline
video\_pinball & 0 & 17667.9 & \textbf{28962} & 210960 & 191240 \\ \hline
wizard\_of\_wor & 563.5 & 4756.52 & 4142 & 5234 & \textbf{5349} \\ \hline
yars\_revenge & 3092.91 & \textbf{54576.93} & 3375 & 26302 & 29403 \\ \hline
zaxxon & 32.5 & 9173.3 & 6251 & 9040 & \textbf{9359} \\ 
         \bottomrule[1.46pt]
    \end{tabular} \par
    \end{sc}
    \end{center}
    \caption{Scores across 57 Atari levels for experiments on general policy-optimization with distributed architecture with \textbf{severe delays} between actors and learner. We compare several alternatives for off-policy correction: V-trace, first-order and second-order. We also provide scores for random policy and human players as reference. All scores are obtained by training for 400M frames. The performance across all algorithms generally degrade significantly compared to Table \ref{table:atarinodelay}, the second-order degrades more gracefully than other baselines.  Best results per game are highlighted in bold.}
    \label{table:ataridelayed}
\end{table}

\subsection{Distributed value-based learning}
\label{appendix:experiment-valuelearning}

\paragraph{Hyper-parameters for Taylor expansions.} The algorithmic details (e.g., the expression for recursive updates) are specified  in Appendix~\ref{appendix:secondorderr2d2}. Given a partial trajectory, the zero-order variant calculates the targets recursively along the entire trajectory. For first-order and second-order variants, we find that calculating the targets recursively along the entire trajectory tends to destabilize the updates. We suspect that this is because the function approximation error accumulates along the recursive computation, leading to very poor estimates at the beginning of the partial trajectory. Note that this is very different from update rules such as Retrace \citep{munos2016safe}, where the trace coefficient $c_t \triangleq \lambda\min\{\bar{c},\pi_t/\mu_t\}$ tends to be zero frequently because $\pi_t$ is a greedy policy, traces are cut automatically and function approximation errors do not accumulate as much along the trajectory. For Taylor expansion variants with order $K\geq 2$, the trace coefficient is effectively $c_t \triangleq 1$ and the trace is not cut at all. To remedy such an issue, we compute corrected n-step updates with $n\triangleq3$. This ensures that the errors do not propagate up to $n$ steps and stabilize the learning process. 

Importantly, we note that the accumulation of errors along trajectories might also happen for policy-based algorithms. However, we speculate that policy-based agents are more robust to such errors because it is the relative values which influence the direction of policy updates. See Appendix \ref{appendix:experiment-offpolicypg} for details on policy-based algorithms.

In the experiments, we found $\eta=0.2$ to work the best. This best hyper-parameter was selected across $\eta \in \{0, 0.2, 0.5, 0.8, 1.0\}$ where $\eta=0$ corresponds to the first-order. Note that this best hyper-parameter differs from those of previous experiments with policy-based agents. This means that carrying out the full second-order expansion does not outperform the first-order; the best outcome is obtained in the middle.


\paragraph{Additional results.} We provide additional results on Atari games in Figure \ref{fig:r2d2all}, where in order to present a more complete picture of the training properties of different algorithmic variants, we provide mean/median/super-human ratio of the human-normalized scores. At each point of the training (e.g., fixing a number of training frames), we have access to the full set of human-normalized scores $z_i,1\leq i\leq 57$. Then, the three statistics are computed as usual across these scores. The super-human ratio is computed as 
the proportion of games such that $z_i > 1$, i.e., such that the learning algorithm reaches super-human performance.

Overall, we see that the second-order expansion provides benefits in terms of the mean performance. In median performance, first-order and second-order are very similar, both providing a slight advantage over Retrace. Across these two statistics, the zero-order achieves the worst results, since the performance plateaus at a low level. However, the super-human ratio statistics implies that the zero-order variant can achieve super-human performance on almost all games as quickly as other more complex variants. 

\paragraph{Details on the distributed architecture.}
We follow the architecture designs of R2D2 \citep{kapturowski2018recurrent}. We recap the important details for completeness. For a complete description, please refer to the original paper. 

The agent contains a single GPU learner and $256$ CPU actors. The policy/value network applies the same architecture  as \citep{mnih2016asynchronous}, with a 3-layer convnet followed by an LSTM with 512 hidden units, whose output is fed into a dueling head (with hidden layer size of 512, \citealt{wang2015dueling}). Importantly, to leverage the recurrent architecture, each time step consists of the current observation frame, the reward and one-hot action embedding from the previous time step. Note that here we do no stack frames as practiced in e.g., IMPALA \citep{espeholt2018impala}.

The actor sends partial trajectories of length $T\triangleq120$ to the replay buffer. Here, the first $T_1\triangleq40$ steps are used for burn-in while the rest $T_2\triangleq80$ steps are used for loss computations. The replay buffer can hold $4\cdot10^6$ time steps and replays according to a priority exponent of $0.9$ and IS exponent of $0.6$ \cite{horgan2018distributed}. The actor synchronizes parameters from the learner every 400 environment time steps. 

To calculate Bellman updates, we take a very high discount factor $\gamma=0.997$. To stabilize the training, a target network is applied to compute the target values. The target network is updated every $2500$ gradient updates of the main network. We also apply a hyperbolic transform in calculating the Bellman target \citep{pohlen2018observe}.

All networks are optimized by an Adam optimizer \citep{kingma2014adam} with learning rate $\alpha \triangleq 10^{-4}$.

\begin{figure}[t]
\includegraphics[width=18cm]{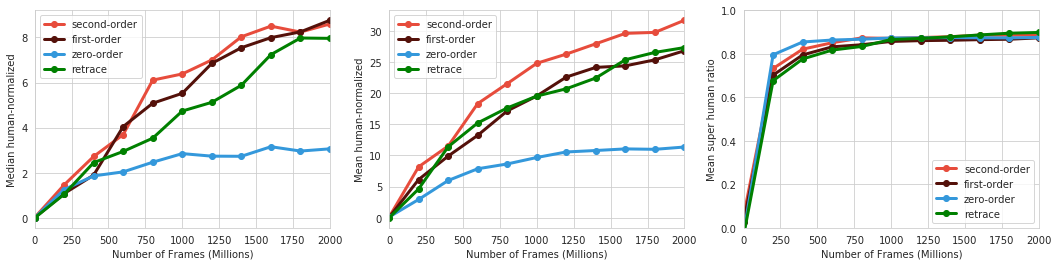}
\centering
\caption{Value-based learning with distributed architecture (R2D2). The x-axis is number of frames (millions) and y-axis shows the mean/median/super-human ratio of human-normalized scores averaged across 57 Atari levels over the training of 2000M frames. Each curve averages across 2 random seeds. The second-order correction performs marginally better than first-order correction and retrace, and significantly better than zero-order. The super-human ratio is computed as the proportion of games with normalized scores $z_i > 1$.}
\label{fig:r2d2all}
\end{figure}

\subsection{Ablation study}
\label{appendix:experiment-ablation}

In this part we study the impact of the hyper-parameter $\eta$ on the performance of algorithms derived from second-order expansion.  In particular, we study the effect of $\eta$ in the near on-policy optimization as in the context of Section~\ref{onpolicy}. In Figure~\ref{fig:eta}, x-axis shows the training frames (400M in total) and y-axis shows the mean human-normalized scores across Atari games. We select $\eta \in \{0.5, 1.0, 1.5\}$ and compare their training curves. We find that when $\eta$ is selected within this range, the training performance does not change much, which hints on some robustness with respect to~$\eta$. Inevitably, when $\eta$ takes extreme values the performance degrades. When $\eta=0$ the algorithm reduces to the first-order case and the performance gets marginally worse as discussed in the main text.

\paragraph{Value-based learning.} The effect of $\eta$ on value-based learning is different from the case of policy-based learning. Since the second-order expansion partially corrects for the value function estimates, its effect becomes more subtle for value-based algorithms such as R2D2. See discussions in Appendix \ref{appendix:secondorderr2d2}.

\begin{figure}[t]
\includegraphics[width=8cm]{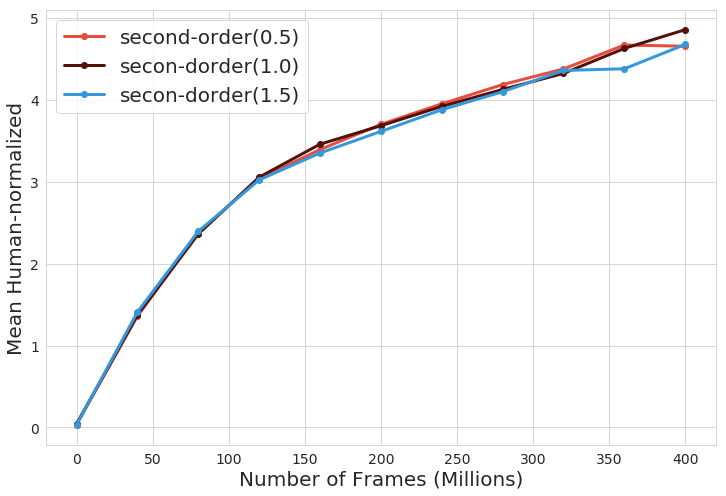}
\centering
\caption{Ablation study on the effect of varying $\eta$. The x-axis shows the training frames (a total of 400M frames) and y-axis shows the mean human-normalized scores averaged across all Atari games. We select $\eta \in \{0.5,1.0,1.5\}$. In the legend, numbers in the brackets indicate the value of $\eta$.}
\label{fig:eta}
\end{figure}

\end{document}